\newcommand{\cTc}{\cT_{\textsc{c}}}
\newcommand{\cTa}{\cT_{\textsc{a}}}
\newcommand{\sQ}{\mathcal{Q}}
\newcommand{\sV}{\mathcal{V}}
\newcommand{\expects}{\expect\nolimits}
\newcommand{\cA}{\mathcal{A}}
\newcommand{\cT}{\mathcal{T}}
\newcommand{\cX}{\mathcal{X}}
\newcommand{\cZ}{\mathcal{Z}}
\newcommand{\bR}{\mathbb{R}}
\newcommand{\bN}{\mathbb{N}}
\newcommand{\indic}[1]{\mathbb{I}_{\left [ #1 \right ]}}
\DeclareMathOperator*{\expect}{{\mathlarger {\mathbf E}}}
\newcommand{\infnorm}[1]{\left \| #1 \right \|_\infty}
\newtheorem{defn}{Definition}
\newtheorem{lem}{Lemma}
\newtheorem{rem}{Remark}
\newtheorem{cor}{Corollary}
\newtheorem{thm}{Theorem}
\newcommand{\cbar}{\, | \,}
\newcommand{\ct}[2][]{
    \ifthenelse{\isempty{#1}}{\cti{#2}}{\ctii{#2}{#1}}}
\newcommand{\cti}[1]{N_n(#1)}
\newcommand{\ctii}[2]{N(#1,#2)}
\DeclareMathOperator*{\argmax}{arg\,max}
\def \tiQ {\tilde Q}
\def \tiV {\tilde V}
\def \cTcq {\cT_{\textsc{cqvi}}}
\def \cTq {\cT_{\textsc{qvi}}}
\def \oneem {\hspace{1em}}
\def \cTal {\cT_{\textsc{al}}}
\def \cTpal {\cT_{\textsc{pal}}}
\def \tipi {\tilde \pi}
\def \tia {\tilde a}
\def \Rmax {\infnorm{R}}
\newcommand{\gamename}[1]{\textsc{#1}}
\newcommand{\citet}[1]{\citeauthor{#1} \shortcite{#1}}
\newcommand{\citep}[1]{(\citeauthor{#1} \citeyear{#1})}
\newcommand{\citenp}[1]{\citeauthor{#1} \citeyear{#1}}
\newcommand{\footnoteref}[1]{\protected@xdef\@thefnmark{\ref{#1}}\@footnotemark}
\begin{document}

\title{Increasing the Action Gap:\\ New Operators for Reinforcement Learning}
\author{{\Large Marc G. Bellemare \and Georg Ostrovski \and Arthur Guez} \\
{\Large {\bf Philip S. Thomas\thanks{Now at Carnegie Mellon University.}} \and {\bf R\'emi Munos}} \\
Google DeepMind \\
\{bellemare,ostrovski,aguez,munos\}@google.com; philipt@cs.cmu.edu
}

\maketitle

\begin{abstract}
\begin{quote}
This paper introduces new optimality-preserving operators on Q-functions.
We first describe an operator for tabular representations, the \emph{consistent Bellman operator}, which incorporates a notion of local policy consistency. 
We show that this local consistency leads to an increase in the action gap at each state; increasing this gap, we argue, mitigates the undesirable effects of approximation and estimation errors on the induced greedy policies.
This operator can also be applied to discretized continuous space and time problems, and we provide empirical results evidencing superior performance in this context. 
Extending the idea of a locally consistent operator, we then derive sufficient conditions for an operator to preserve optimality, leading to a family of operators which includes our consistent Bellman operator. As corollaries we provide a proof of optimality for Baird's advantage learning algorithm and derive other gap-increasing operators with interesting properties. 
We conclude with an empirical study on 60 Atari 2600 games illustrating the strong potential of these new operators. 
\end{quote}
\end{abstract}

Value-based reinforcement learning is an attractive solution to planning problems in environments with unknown, unstructured dynamics. 
In its canonical form, value-based reinforcement learning produces successive refinements of an initial value function through repeated application of a convergent operator.
In particular, value iteration \citep{bellman57dynamic} directly computes the value function through the iterated evaluation of Bellman's equation, either exactly or from samples (e.g. Q-Learning, \citenp{watkins89learning}). 

In its simplest form, value iteration begins with an initial value function $V_0$ and successively computes $V_{k+1} := \cT V_k$, where $\cT$ is the Bellman operator. When the environment dynamics are unknown, $V_k$ is typically replaced by $Q_k$, the state-action value function, and $\cT$ is approximated by an empirical Bellman operator. The fixed point of the Bellman operator, $Q^*$, is the optimal state-action value function or \emph{optimal Q-function}, from which an optimal policy $\pi^*$ can be recovered.

In this paper we argue that the optimal $Q$-function is \emph{inconsistent}, in the sense that for any action $a$ which is suboptimal in state $x$, Bellman's equation for $Q^*(x,a)$ describes the value of a \emph{nonstationary} policy: upon returning to $x$, this policy selects $\pi^*(x)$ rather than $a$.
While preserving global consistency appears impractical, we propose a simple modification to the Bellman operator which provides us a with a first-order solution to the inconsistency problem. Accordingly, we call our new operator the \emph{consistent Bellman operator}. 

We show that the consistent Bellman operator generally devalues suboptimal actions but preserves the set of optimal policies. As a result, the action gap -- the value difference between optimal and second best actions -- increases. Increasing the action gap is advantageous in the presence of approximation or estimation error \citep{farahmand11actiongap}, and may be crucial for systems operating at a fine time scale such as video games \cite{togelius09super,bellemare13arcade}, real-time markets \cite{jiang15optimal}, and robotic platforms \cite{riedmiller09reinforcement,hoburg09system,deisenroth11pilco,sutton11horde}.
In fact, the idea of devaluating suboptimal actions underpins Baird's advantage learning \citep{baird99reinforcement}, designed for continuous time control, and occurs naturally when considering the discretized solution of continuous time and space MDPs (e.g. \citenp{munos98barycentric}; \citeyear{munos02variable}), whose limit is the Hamilton-Jacobi-Bellman equation \citep{kushner01numerical}. Our empirical results on the bicycle domain \citep{randlov98learning} show a marked increase in performance from using the consistent Bellman operator. 

In the second half of this paper we derive novel sufficient conditions for an operator to preserve optimality. The relative weakness of these new conditions reveal that it is possible to deviate significantly from the Bellman operator without sacrificing optimality: an optimality-preserving operator needs not be contractive, nor even guarantee convergence of the Q-values for suboptimal actions. 
While numerous alternatives to the Bellman operator have been put forward (e.g. recently \citenp{azar11speedy}; \citenp{bertsekas12qlearning}), we believe our work to be the first to propose such a major departure from the canonical fixed-point condition required from an optimality-preserving operator. As proof of the richness of this new operator family we describe a few practical instantiations with unique properties.

We use our operators to obtain state-of-the-art empirical results on the Arcade Learning Environment \citep{bellemare13arcade}. We consider the Deep Q-Network (DQN) architecture of \citet{mnih15human}, replacing only its learning rule with one of our operators. Remarkably, this one-line change produces agents that significantly outperform the original DQN.  
Our work, we believe, demonstrates the potential impact of rethinking the core components of value-based reinforcement learning.

\section{Background}

We consider a Markov decision process $M := (\cX, \cA, P, R, \gamma)$ where $\cX$ is the state space, $\cA$ is the finite action space, $P$ is the transition probability kernel, $R$ is the reward function mapping state-action pairs to a bounded subset of $\bR$, and $\gamma \in [0,1)$ is the discount factor. 
We denote by $\sQ := \sQ_{\cX, \cA}$ and $\sV := \sV_{\cX}$ the space of bounded real-valued functions over $\cX \times \cA$ and $\cX$, respectively. 
For $Q \in \sQ$ we write $V(x) := \max_a Q(x,a)$, and follow this convention for related quantities ($\tiV$ for $\tiQ$, $V'$ for $Q'$, etc.) whenever convenient and unambiguous.
In the context of a specific $(x,a) \in \cX \times \cA$ we further write $\expects_P := \expects_{x' \sim P(\cdot \cbar x, a)}$ to mean the expectation with respect to $P(\cdot \cbar x,a)$, with the convention that $x'$ always denotes the next state random variable. 

A deterministic policy $\pi : \cX \to \cA$ induces a Q-function $Q^\pi \in \sQ$ whose \emph{Bellman equation} is 
\begin{equation*}\label{eqn:bellman_equation_for_policy}
Q^\pi(x,a) := R(x,a) + \gamma \expects_P Q^\pi (x',\pi(x')) .
\end{equation*}
The state-conditional \emph{expected return} $V^\pi(x) := Q^\pi(x,\pi(x))$ is the expected discounted total reward received from starting in $x$ and following $\pi$.

The Bellman operator $\cT : \sQ \to \sQ$ is defined pointwise as
\begin{equation}\label{eqn:bellman_operator}
\cT Q(x,a) := R(x,a) + \gamma \expects_P \max_{b \in \cA} Q(x',b) .
\end{equation}
$\cT$ is a contraction mapping in supremum norm \citep{bertsekas96neurodynamic} whose unique fixed point is the optimal Q-function
\begin{equation*}
Q^*(x,a) = R(x,a) + \gamma \expects_P \max_{b \in \cA} Q^*(x',b),
\end{equation*}
which induces the optimal policy $\pi^*$: 
\begin{equation*}
\pi^*(x) := \argmax_{a \in \cA} Q^*(x,a) \quad \forall x \in \cX .
\end{equation*}

A Q-function $Q \in \sQ$ induces a greedy policy $\pi(x) := \argmax_a Q(x, a)$, with the property that $Q^{\pi} = Q$ if and only if $Q = Q^*$. For $x \in \cX$ we call $\pi(x)$ the \emph{greedy action} with respect to $Q$ and $a \ne \pi(x)$ a \emph{nongreedy action}; for $\pi^*$ these are the usual optimal and suboptimal actions, respectively. 

We emphasize that while we focus on the Bellman operator, our results easily extend to its variations such as SARSA \citep{rummery94online}, policy evaluation \citep{sutton88learning}, and fitted Q-iteration \citep{ernst05treebased}. In particular, our new operators all have a sample-based form, i.e., an analogue to the Q-Learning rule of \citet{watkins89learning}.

\section{The Consistent Bellman Operator}

It is well known (and implicit in our notation) that the optimal policy $\pi^*$ for $M$ is \emph{stationary} (i.e., time-independent) and deterministic. In looking for $\pi^*$, we may therefore restrict our search to the space $\Pi$ of stationary deterministic policies. Interestingly, as we now show the Bellman operator on $\sQ$ is \emph{not}, in a sense, restricted to $\Pi$. 

\begin{figure}
\centering{
\includegraphics[width=3in]{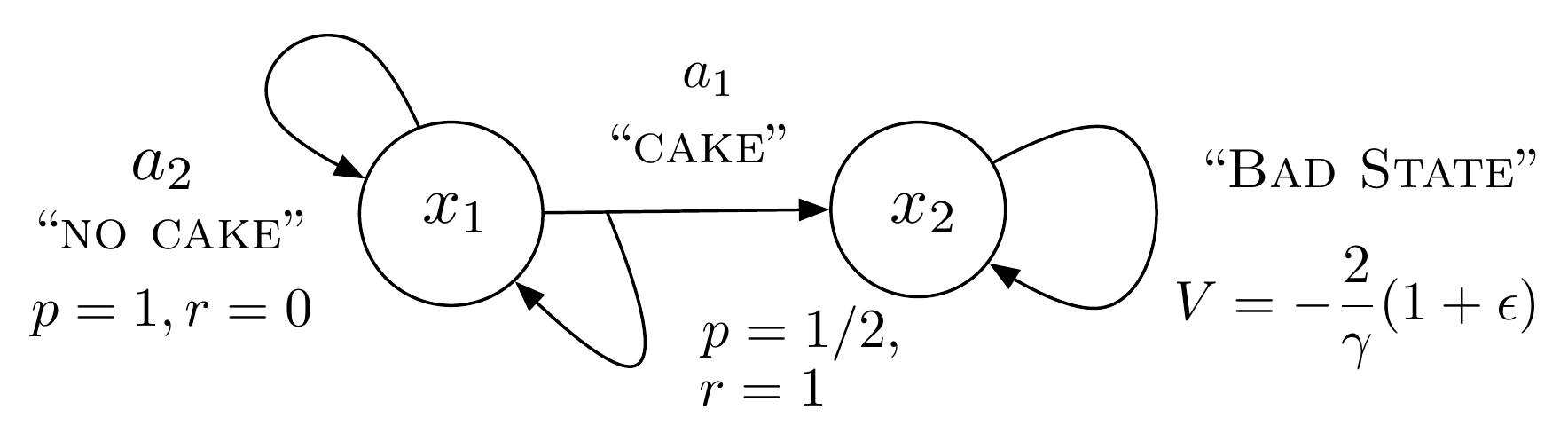}
}
\caption{A two-state MDP illustrating the non-stationary aspect of the Bellman operator. Here, $p$ and $r$ indicate transition probabilities and rewards, respectively. In state $x_1$ the agent may either eat cake to receive a reward of 1 and transition to $x_2$ with probability $\tfrac{1}{2}$, or abstain for no reward. State $x_2$ is a low-value absorbing state with $\epsilon > 0$.\label{fig:cookie_gadget}}
\end{figure}
To begin, consider the two-state MDP depicted in Figure \ref{fig:cookie_gadget}. 
This MDP abstracts a Faustian situation in which an agent repeatedly chooses between an immediately rewarding but ultimately harmful option ($a_1$), or an unrewarding alternative ($a_2$). For concreteness, we imagine the agent as faced with an endless supply of delicious cake (with $\gamma > 0$) and call these the ``cake'' and ``no cake'' actions. 

Eating cake can cause a transition to $x_2$, the ``bad state'', whose value is independent of the agent's policy: 
\begin{equation*}
V^\pi(x_2) := -2 (1 + \epsilon) \frac{1}{\gamma} \qquad \forall \pi \in \Pi.
\end{equation*}
In state $x_1$, however, the Q-values depend on the agent's future behaviour. For a policy $\pi \in \Pi$, the value of $a_1$ is 
\begin{align}
Q^\pi(x_1, a_1) &= 1 + \gamma \left [\frac{1}{2} V^\pi(x_1) + \frac{1}{2} V^\pi(x_2) \right ] \label{eqn:the_cost_of_cookie} \\
&= 1 + \frac{\gamma}{2} V^\pi(x_1) - (1 + \epsilon) = \frac{\gamma}{2} V^\pi(x_1) - \epsilon .\nonumber
\end{align}
By contrast, the value of $a_2$ is 
\begin{equation*}
Q^\pi(x_1, a_2) = 0 + \gamma V^\pi(x_1) , 
\end{equation*}
which is greater than $Q^\pi(x_1, a_1)$ for all $\pi$. It follows that not eating cake is optimal, and thus $V^*(x_1) = Q^*(x_1, a_2) = 0$. Furthermore, (\ref{eqn:the_cost_of_cookie}) tells us that the value difference between optimal and second best action, or \emph{action gap}, is 
\begin{equation*}
Q^*(x_1, a_2) - Q^*(x_1, a_1) = \epsilon .
\end{equation*}
Notice that $Q^*(x_1, a_1) = -\epsilon$ does not describe the value of any stationary policy. That is, the policy $\tipi$ with $\tipi(x_1) = a_1$ has value
\begin{equation}\label{eqn:bad_policy}
V^{\tipi}(x_1) = -\epsilon + \frac{\gamma}{2} V^{\tipi}(x_1) = \frac{-\epsilon}{1 - \gamma/2},
\end{equation}
and in particular this value is lower than $Q^*(x_1, a_1)$. Instead, $Q^*(x_1, a_1)$ describes the value of a \emph{nonstationary} policy which eats cake once, but then subsequently abstains.

So far we have considered the Q-functions of given stationary policies $\pi$, and argued that these are nonstationary. We now make a similar statement about the Bellman operator: for any $Q \in \sQ$, the nongreedy components of $Q' := \cT Q$ do not generally describe the expected return of stationary policies. Hence the Bellman operator is not restricted to $\Pi$.

When the MDP of interest can be solved exactly, this nonstationarity is a non-issue since only the Q-values for optimal actions matter. In the presence of estimation or approximation error, however, small perturbations in the Q-function may result in erroneously identifying the optimal action. 
Our example illustrates this effect: an estimate $\hat Q$ of $Q^*$ which is off by $\epsilon$ can induce a pessimal greedy policy (i.e. $\tipi$). 

To address this issue, we may be tempted to define a new Q-function which explicitly incorporates stationarity: 
\begin{align}
Q_{\textsc{stat}}^\pi(x,a) &:= R(x,a) + \gamma \expects_P \max_{b \in \cA} Q_{\textsc{stat}}^{\pi'}(x', b), \label{eqn:what_we_would_like} \\
\pi'(y) &:= \left \{ \begin{array}{ll}
    a & \text{if } y = x, \\
    \pi(y) & \text{otherwise.}
    \end{array} \right . \nonumber 
\end{align}
Under this new definition, the action gap of the optimal policy is $\frac{\epsilon}{1-\gamma/2} > Q^*(x_1, a_2) - Q^*(x_1, a_1)$. Unfortunately, (\ref{eqn:what_we_would_like}) does not visibly yield a useful operator on $\sQ$. 
As a practical approximation we now propose the \emph{consistent Bellman operator}, which preserves a local form of stationarity:
\begin{align}
\cTc Q(x,a) &:= R(x,a) \oneem + \label{eqn:consistent_bellman_operator} \\
& \gamma \expects_P \big [ \indic{x \ne x'} \max_{b \in \cA} Q(x',b) + \indic{x = x'} Q(x,a) \big ] . \nonumber
\end{align}
Effectively, our operator redefines the meaning of Q-values: if from state $x \in \cX$ an action $a$ is taken and the next state is $x' = x$ then $a$ is again taken. In our example, this new Q-value describes the expected return for repeatedly eating cake until a transition to the unpleasant state $x_2$.

Since the optimal policy $\pi^*$ is stationary, we may intuit that iterated application of this new operator also yields $\pi^*$. In fact, below we show that the consistent Bellman operator is both \emph{optimality-preserving} and, in the presence of direct loops in the corresponding transition graph, \emph{gap-increasing}:
\begin{defn}
An operator $\cT'$ is \emph{optimality-preserving} if, for any $Q_0 \in \sQ$ and $x \in \cX$, letting $Q_{k+1} := \cT' Q_k$, 
\begin{equation*}
\tiV(x) := \lim_{k \to \infty} \max_{a \in \cA} Q_k(x,a)
\end{equation*}
exists, is unique, $\tiV(x) = V^*(x)$, and for all $a \in \cA$,
\begin{equation*}
Q^*(x,a) < V^*(x,a) \hspace{-0.3em} \implies \hspace{-0.3em} \limsup_{k \to \infty} Q_k(x,a) < V^*(x). 
\end{equation*}
\end{defn}
Thus under an optimality-preserving operator at least one optimal action remains optimal, and suboptimal actions remain suboptimal. 
\begin{defn}
Let $M$ be an MDP. An operator $\cT'$ for $M$ is \emph{gap-increasing} if for all $Q_0 \in \sQ$, $x \in \cX, a \in \cA$, letting $Q_{k+1} := \cT' Q_k$ and $V_k(x) := \max_b Q_k(x,b)$, 
\begin{equation}\label{eqn:gap_increasing}
\liminf_{k \to \infty} \big [ V_k(x) - Q_k(x,a) \big ] \ge V^*(x) - Q^*(x,a) .
\end{equation}
\end{defn}
We are particularly interested in operators which are \emph{strictly gap-increasing}, in the sense that (\ref{eqn:gap_increasing}) is a strict inequality for at least one $(x,a)$ pair.

Our two-state MDP illustrates the first benefit of increasing the action gap: a greater \emph{robustness to estimation error}. Indeed, under our new operator the optimal Q-value of eating cake becomes 
\begin{equation*}
\tiQ(x_1, a_1) = \frac{\gamma}{2} \tiQ(x_1, a_1) - \epsilon = \frac{-\epsilon}{1-\gamma/2} , 
\end{equation*}
which is, again, smaller than $Q^*(x_1, a_1)$ whenever $\gamma > 0$. In the presence of approximation error in the Q-values, we may thus expect $\tiQ(x_1, a_1) < \tiQ (x_1, a_2)$ to occur more frequently than the converse.

\subsection{Aggregation Methods}

At first glance, the use of an indicator function in (\ref{eqn:consistent_bellman_operator}) may seem limiting: $P(x \cbar x, a)$ may be zero or close to zero every\-where, or the state may be described by features which preclude a meaningful identity test $\indic{x = x'}$. There is, however, one important family of value functions which have ``tabular-like'' properties: \emph{aggregation schemes} \citep{bertsekas11approximate}. As we now show, the consistent Bellman operator is well-defined for all aggregation schemes. 

An aggregation scheme for $M$ is a tuple $(\cZ,A,D)$ where $\cZ$ is a set of aggregate states, $A$ is a mapping from $\cX$ to distributions over $\cZ$, and $D$ is a mapping from $\cZ$ to distributions over $\cX$. 
For $z \in \cZ, x' \in \cX$ let $\expects_{D} := \expect_{x \sim D(\cdot \cbar z)}$ and $\expects_{A} := \expect_{z' \sim A(\cdot \cbar x')}$, where as before we assign specific roles to $x, x' \in \cX$ and $z, z' \in \cZ$. We define the \emph{aggregation Bellman operator} $\cTa : \sQ_{\cZ,\cA} \to \sQ_{\cZ,\cA}$ as
\begin{equation}\label{eqn:aggregate_bellman_operator}
\small{
\cTa Q(z,a) := \expects_{D} \left [ R(x,a) + \gamma \expects_P \expects_{A} \max_{b \in \cA} Q(z',b) \right ].
}
\end{equation}
When $\cZ$ is a finite subset of $\cX$ and $D$ corresponds to the identity transition function, i.e.~$D(x \cbar z) = \indic{x = z}$, we recover the class of averagers (\citenp{gordon95stable}; e.g., multilinear interpolation, illustrated in Figure \ref{fig:multilinear_interpolation})
and kernel-based methods \citep{ormoneit02kernelbased}. If $A$ also corresponds to the identity and $\cX$ is finite, $\cTa$ reduces to the Bellman operator (\ref{eqn:bellman_operator}) and we recover the familiar tabular representation \citep{sutton98reinforcement}.
\begin{figure}
\centering{
\includegraphics[width=2.4in]{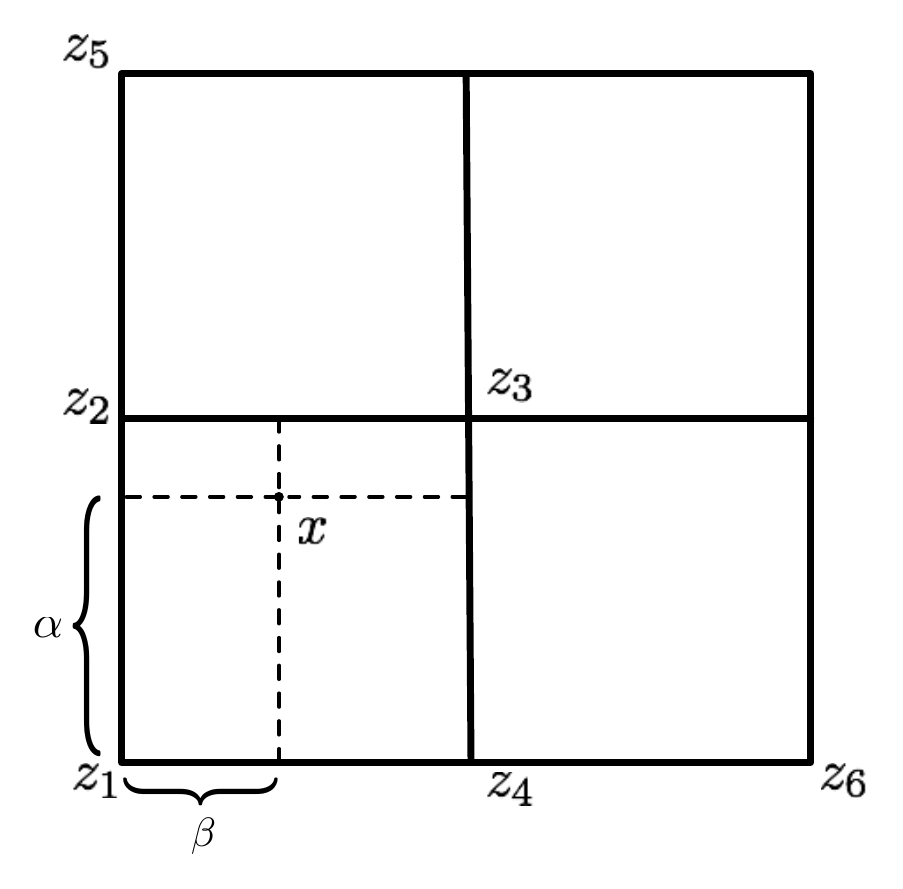}
}
\vspace{-1.5em}
\caption{Multilinear interpolation in two dimensions. The value at $x$ is approximated as $V(x) := \expects_{z' \sim A(\cdot \cbar x)} V(z')$. Here $A(z_1 \cbar x) = (1 - \alpha) (1 - \beta)$, $A(z_2 \cbar x) = \alpha (1 - \beta)$, etc.\label{fig:multilinear_interpolation}}
\end{figure}

Generalizing (\ref{eqn:consistent_bellman_operator}), we define the consistent Bellman operator $\cTc$ over $\sQ_{\cZ, \cA}$:
\begin{align}
\cTc Q(z,a) &:= \expects_{D} \Big [ R(x,a) \oneem + \label{eqn:consistent_bellman_operator_aggregation} \\
&\hspace{-2em} \gamma \expects_P \expects_A \big [ \indic{z \ne z'} \max_{b \in \cA} Q(z',b) + \indic{z = z'} Q(z,a) \big ] \Big ] . \nonumber
\end{align}
Intuitively (see, e.g., \citenp{bertsekas11approximate}), the $D$ and $A$ mappings induce a new MDP, $M' := (\cZ, \cA, P', R', \gamma)$ with
\begin{align*}
R'(z,a) & := \expects_{D} R(x,a), \\
P'(z'' \cbar z,a) & := \expects_{D} \expects_P \expects_{A} \indic{z'' = z'} .
\end{align*}
In this light, we see that our original definition of $\cTc$ and (\ref{eqn:consistent_bellman_operator_aggregation}) only differ in their interpretation of the transition kernel. Thus the consistent Bellman operator remains relevant in cases where $P$ is a deterministic transition kernel, for example when applying multilinear or barycentric interpolation to continuous space MDPs (e.g. \citenp{munos98barycentric}). 

\subsection{Q-Value Interpolation}\label{sec:q_value_interpolation}

Aggregation schemes as defined above do not immediately yield a Q-function over $\cX$. Indeed, the Q-value at an arbitrary $x \in \cX$ is defined (in the ordinary Bellman operator sense) as
\begin{equation}
Q(x,a) := R(x,a) + \gamma \expects_P \expects_{A} \max_{b \in \cA} Q(z', b), \label{eqn:q_value_for_v_interpolation}
\end{equation}
which may only be computed from a full or partial model of the MDP, or by inverting $D$. It is often the case that neither is feasible. One solution is instead to perform Q-value interpolation: 
\begin{equation*}
Q(x,a) := \expects_{z' \sim A(\cdot \cbar x)} Q(z', a),
\end{equation*}
which is reasonable when $A(\cdot \cbar x)$ are interpolation coefficients\footnote{One then typically, but not always, takes $D$ to be the identity.}. This gives the related Bellman operator
\begin{equation*}
\cTq Q(z,a) := \expects_D \left [ R(x,a) + \gamma \expects_P \max_{b \in \cA} Q(x',b) \right ],
\end{equation*}
with $\cTq Q(z,a) \le \cT Q(z,a)$ by convexity of the $\max$ operation. From here one may be tempted to define the corresponding consistent operator as
\begin{align*}
\cTq' Q(z,a) &:= \expects_D \Big [ R(x,a) \oneem + \\
&\hspace{-3.5em} \gamma \expects_P \max_{b \in \cA} \big [ Q(x',b) - A(z \cbar x') \big (Q(z, b) - Q(z, a) \big) \big ] \Big ].
\end{align*}
While $\cTq'$ remains a contraction, $\cTq' Q(z,a) \le \cTq Q(z,a)$ is not guaranteed, and it is easy to show that $\cTq'$ is not optimality-preserving. Instead we define the \emph{consistent Q-value interpolation Bellman operator} as 
\begin{equation}\label{eqn:consistent_q_averaging_operator}
\cTcq Q := \min \left \{ \cTq Q, \cTq' Q \right \} .
\end{equation}
As a corollary to Theorem \ref{thm:optimality_achieving_operators} below we will prove that $\cTcq$ is also optimality-preserving and gap-increasing. 

\subsection{Experiments on the Bicycle Domain}\label{subsec:bicycle_domain}

We now study the behaviour of our new operators on the bicycle domain \citep{randlov98learning}. In this domain, the agent must simultaneously balance a simulated bicycle and drive it to a goal 1km north of its initial position. Each time step consists of a hundredth of a second, with a successful episode typically lasting 50,000 or more steps. The driving aspect of this problem is particularly challenging for value-based methods, since each step contributes little to an eventual success and the ``curse of dimensionality'' \citep{bellman57dynamic} precludes a fine representation of the state-space. 
In this setting our consistent operator provides significantly improved performance and stability.

We approximated value functions using multilinear interpolation on a uniform $10 \times \dots \times 10$ grid over a 6-dimensional feature vector $\varphi := (\omega, \dot \omega, \theta, \dot \theta, \psi, d)$. The first four components of $\varphi$ describe relevant angles and angular velocities, while $\psi$ and $d$ are polar coordinates describing the bicycle's position relative to the goal.
We approximated Q-functions using Q-value interpolation ($\cTcq$) over this grid, since in a typical setting we may not have access to a forward model.

We are interested here in the \emph{quality} of the value functions produced by different operators. We thus computed our Q-functions using value iteration, rather than a trajectory-based method such as Q-Learning. More precisely, at each iteration we simultaneously apply our operator to all grid points, with expected next state values estimated from samples. The interested reader may find full experimental details and videos in the appendix.\footnote{Videos: https://youtu.be/0pUFjNuom1A}

While the limiting value functions ($\tiV$ and $V^*$) coincide on $\cZ \subseteq \cX$ (by the optimality-preserving property), they may differ significantly elsewhere. For $x \in \cX$ we have 
\begin{align*}
\tiV(x) &= \max\nolimits_a \tiQ(x,a) = \max\nolimits_a \expects_{A(\cdot \cbar x)} \tiQ(z, a) \\
&\ne V^*(x) 
\end{align*}
in general. This is especially relevant in the relatively high-dimensional bicycle domain, where a fine discretization of the state space is not practical and most of the trajectories take place ``far'' from grid points. As an example, consider $\psi$, the relative angle to the goal: each grid cell covers an arc of $2\pi / 10 = \pi / 5$, while a single time step typically changes $\psi$ by less than $\pi / 1000$. 

\begin{figure}
\centering{
\includegraphics[width=1.6in]{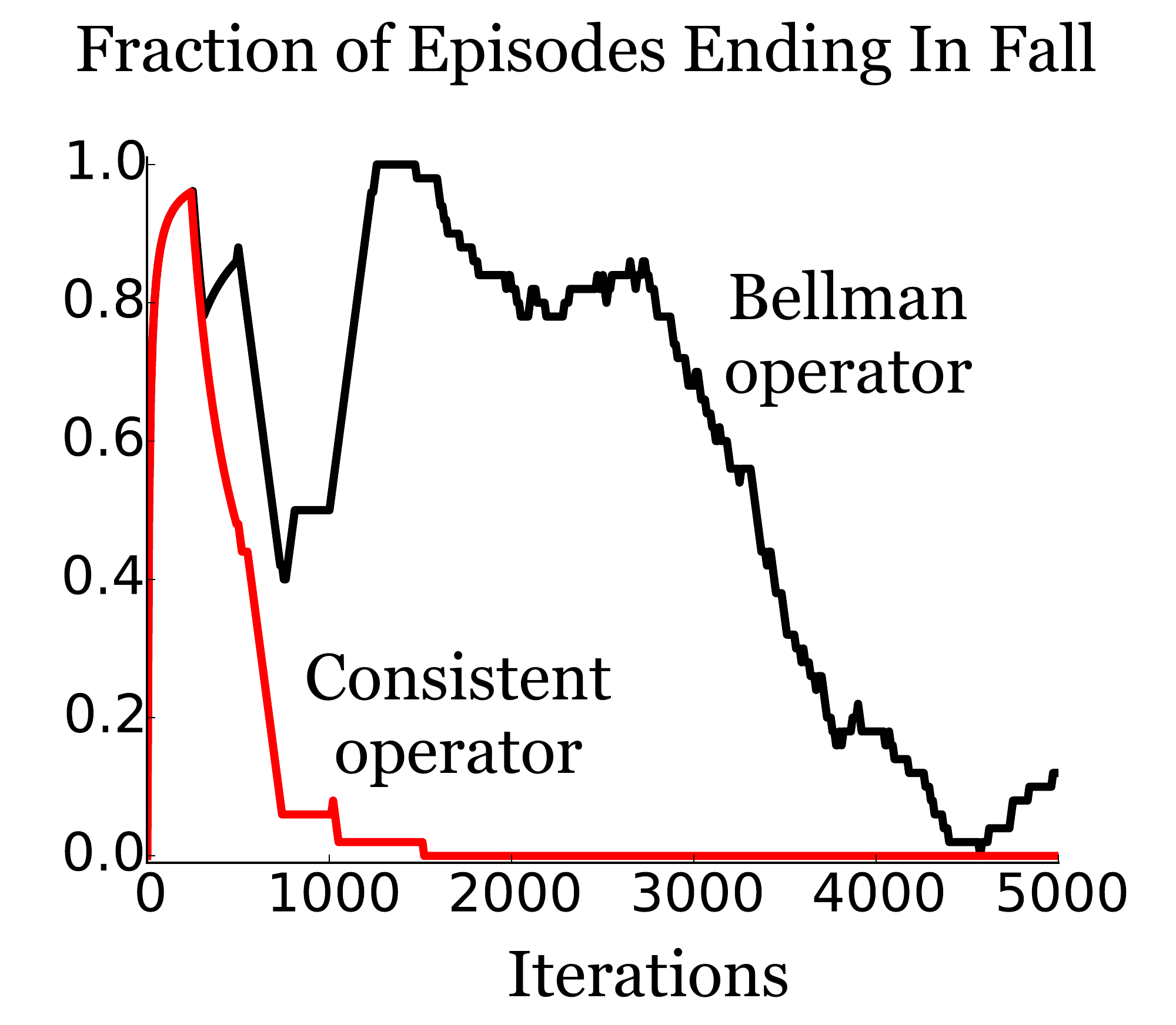}
\includegraphics[width=1.6in]{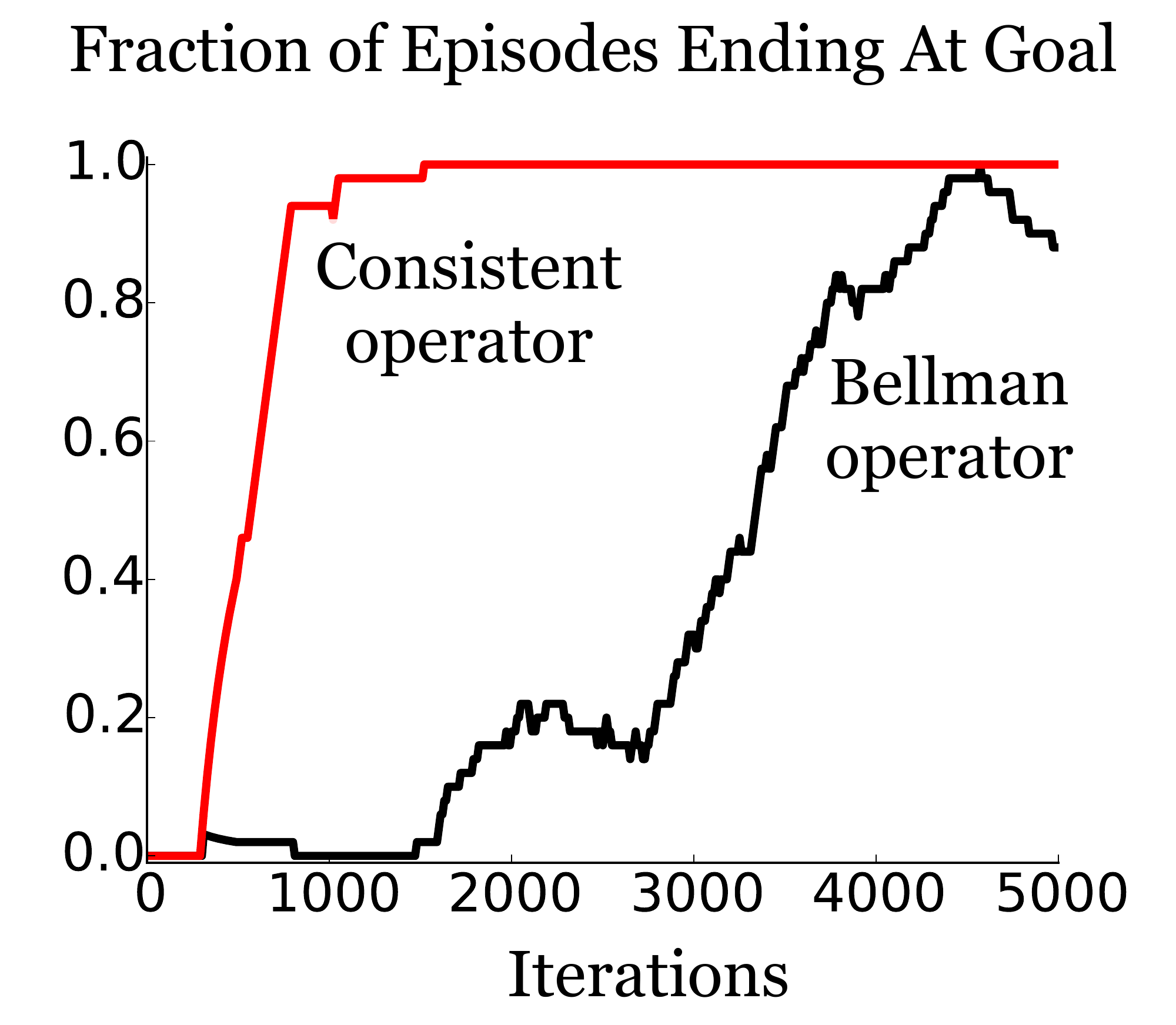}
\includegraphics[width=2.8in]{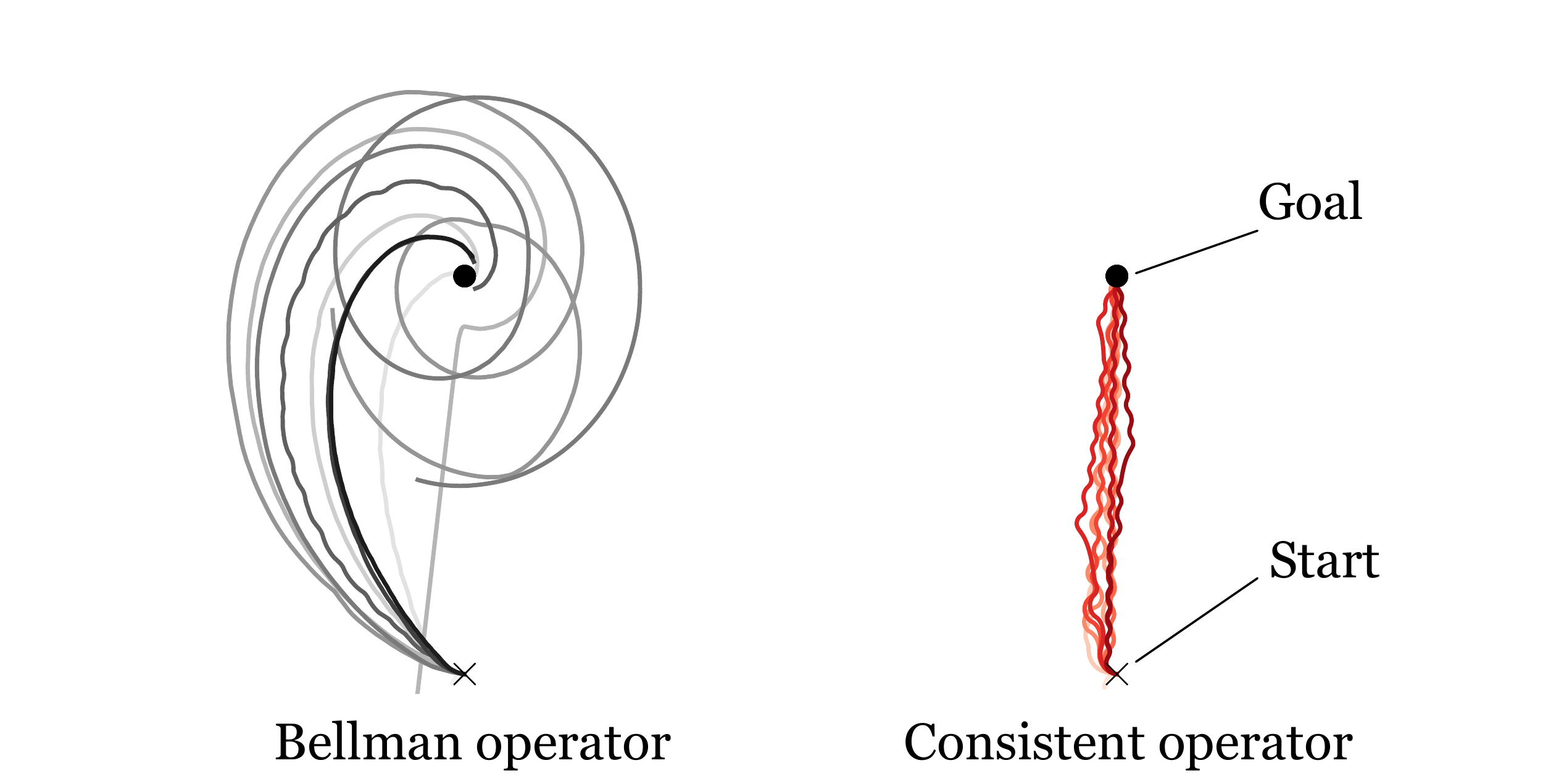}
}
\caption{\textbf{Top}. Falling and goal-reaching frequency for greedy policies derived from value iteration. \textbf{Bottom}. Sample bicycle trajectories after $100, 200, \dots, 1000$ iterations. In this coarse-resolution regime, the Bellman operator initially yields policies which circle the goal forever, while the consistent operator quickly yields successful trajectories.\label{fig:bicycle-q-averaging-performance}}
\end{figure}

Figure \ref{fig:bicycle-q-averaging-performance} summarizes our results. Policies derived from our consistent operator can safely balance the bicycle earlier on, and also reach the goal earlier than policies derived from the Bellman operator. Note, in particular, the striking difference in the trajectories followed by the resulting policies. The effect is even more pronounced when using a $8 \times \dots \times 8$ grid (results provided in the appendix).
Effectively, by decreasing suboptimal Q-values at grid points we produce much better policies \emph{within} the grid cells. This phenomenon is consistent with the theoretical results of \citet{farahmand11actiongap} relating the size of action gaps to the quality of derived greedy policies. Thus we find a second benefit to increasing the action gap: it \emph{improves policies derived from Q-value interpolation}.

\section{A Family of Convergent Operators}\label{sec:family_convergent_operators}

One may ask whether it is possible to extend the consistent Bellman operator to Q-value approximation schemes which lack a probabilistic interpretation, such as linear approximation \citep{sutton96generalization}, locally weighted regression \citep{atkeson91using}, neural networks \citep{tesauro95temporal}, or even information-theoretic methods \citep{veness15compress}. In this section we answer by the affirmative.

The family of operators which we describe here are applicable to arbitrary Q-value approximation schemes.
While these operators are in general no longer contractions, they are gap-increasing, and optimality-preserving when the Q-function is represented exactly. 
Theorem \ref{thm:optimality_achieving_operators} is our main result; one corollary is a convergence proof for Baird's advantage learning \citep{baird99reinforcement}. 
Incidentally, our taking the minimum in (\ref{eqn:consistent_q_averaging_operator}) was in fact no accident, but rather a simple application of this theorem. 

\begin{thm}\label{thm:optimality_achieving_operators}
Let $\cT$ be the Bellman operator defined by (\ref{eqn:bellman_operator}). Let $\cT'$ be an operator with the property that there exists an $\alpha \in [0, 1)$ such that for all $Q \in \sQ$, $x \in \cX, a \in \cA$, and letting $V(x) := \max\nolimits_b Q(x,b)$,
\begin{enumerate}
\item{$\cT' Q(x, a) \le \cT Q(x,a)$, and}
\item{$\cT' Q(x, a) \ge \cT Q(x,a) - \alpha \left [ V(x) - Q(x,a) \right ]$.}
\end{enumerate}
Then $\cT'$ is both optimality-preserving and gap-increasing.
\end{thm}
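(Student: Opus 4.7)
My plan is to combine an easy upper bound from Condition 1 with a delicate pointwise lower bound driven by Condition 2, then extract gap-increasing as a corollary.

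For the upper bound, iterating $\cT' Q \le \cT Q$ together with monotonicity of the value-function Bellman operator $\hat \cT V(x) := \max_a [R(x,a) + \gamma \expects_{P(\cdot \cbar x,a)} V(x')]$ gives $V_k \le \hat \cT^k V_0$ by induction, hence $V_k(x) \le V^*(x) + \gamma^k M$ and $Q_k(x,a) \le Q^*(x,a) + \gamma^k M$ where $M := \infnorm{V_0 - V^*}$. This already delivers the suboptimal-action clause of optimality-preservation: $Q^*(x,a) < V^*(x)$ implies $\limsup_k Q_k(x,a) \le Q^*(x,a) < V^*(x)$.

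The key algebraic leverage is that at any greedy action $a^* \in \argmax_b Q(x,b)$ one has $V(x) - Q(x,a^*) = 0$, so Conditions 1 and 2 together sandwich $\cT' Q(x, a^*) = \cT Q(x, a^*)$. Writing $a_k^*(x) \in \argmax_b Q_k(x,b)$, this yields $V_{k+1}(x) \ge R(x, a_k^*(x)) + \gamma \expects_{P(\cdot \cbar x, a_k^*(x))} V_k(x')$, a full Bellman backup at the greedy action. For the optimal action specifically, Condition 2 gives
\[
Y_{k+1}(x) \ge R(x, \pi^*(x)) + \gamma \expects_{P^*} V_k(x') - \alpha \bigl(V_k(x) - Y_k(x)\bigr),
\]
where $Y_k(x) := Q_k(x, \pi^*(x))$ and $P^* := P(\cdot \cbar x, \pi^*(x))$. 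Passing to $\liminf$ via Fatou on $\tilde V := \liminf_k V_k$, and using $V_k \ge Y_k$ together with $V_k - V^* \le \gamma^k M$, yields the pointwise inequality $(1-\alpha)(V^*(x) - \tilde V(x)) \le \gamma \expects_{P^*}(V^* - \tilde V)(x')$ on the nonnegative function $D := V^* - \tilde V$.

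This last step is the main obstacle. If $\gamma + \alpha < 1$, a sup-norm argument on $D$ immediately forces $D \equiv 0$; for general $\alpha \in [0,1)$ one must instead argue pointwise, exploiting the upper bound $Q_k(x,a) \le V^*(x) - \Delta^*(x,a) + \gamma^k M$ on each suboptimal action (with $\Delta^*(x,a) := V^*(x) - Q^*(x,a)$) to conclude that the greedy choice $a_k^*(x)$ is forced to be optimal once $\gamma^k M$ falls below the minimum nonzero action gap at $x$; from that point on the iteration at $x$ satisfies $V_{k+1}(x) \ge \cT^{\pi^*} V_k(x)$, a genuine $\gamma$-contraction toward $V^*$. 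Combining $\tilde V = V^*$ with the earlier upper bound gives $V_k(x) \to V^*(x)$, finishing optimality-preservation. Gap-increasing then drops out: combining the same upper bound with the just-proved lower bound $V_k(x) \ge V^*(x) - \epsilon_k(x)$ (with $\epsilon_k(x) \to 0$) gives $V_k(x) - Q_k(x,a) \ge V^*(x) - Q^*(x,a) - \gamma^k M - \epsilon_k(x)$, and taking $\liminf$ yields the desired inequality.
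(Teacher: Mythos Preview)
Your upper-bound argument and the greedy-action identity $\cT' Q(x,a^*) = \cT Q(x,a^*)$ are correct, and the Fatou step leading to $(1-\alpha)D(x) \le \gamma \expects_{P^*} D(x')$ for $D := V^* - \liminf_k V_k \ge 0$ is sound (modulo a uniform boundedness check on $V_k$, which is easy but should be stated). The case $\gamma + \alpha < 1$ is then fine.

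The gap is in your treatment of general $\alpha$. The claim that ``the greedy choice $a_k^*(x)$ is forced to be optimal once $\gamma^k M$ falls below the minimum nonzero action gap at $x$'' is circular. Your upper bound $Q_k(x,a) \le Q^*(x,a) + \gamma^k M$ tells you that every \emph{suboptimal} action eventually has value strictly below $V^*(x)$, but it does not prevent $Q_k(x,\pi^*(x))$ from being even smaller: you have no lower bound on $Q_k(x,\pi^*(x))$ near $V^*(x)$ --- that is precisely the statement you are trying to prove. So nothing rules out the greedy action remaining suboptimal indefinitely, and the subsequent ``$\gamma$-contraction toward $V^*$'' never kicks in. A secondary issue is that even if the claim held pointwise, the threshold depends on the minimum nonzero gap at $x$; on an infinite $\cX$ these gaps can accumulate at zero, so you would not get the uniform-in-$x$ inequality $V_{k+1} \ge \cT^{\pi^*} V_k$ needed for a sup-norm contraction.

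The paper sidesteps this entirely by proving convergence of $V_k$ \emph{before} identifying the limit. From your greedy-action identity one gets, with $a_k := a_k^*(x)$ and $P_k := P(\cdot\cbar x,a_k)$,
\[
V_{k+1}(x) \ge \cT Q_k(x,a_k) = \cT Q_{k-1}(x,a_k) + \gamma \expects_{P_k}[V_k - V_{k-1}](x') \ge Q_k(x,a_k) + \gamma \expects_{P_k}[V_k - V_{k-1}](x'),
\]
i.e.\ $V_{k+1}(x) - V_k(x) \ge \gamma \expects_{P_k}[V_k - V_{k-1}](x')$, and telescoping yields $V_{k+1}(x) - V_k(x) \ge -\gamma^k \infnorm{V_1 - V_0}$. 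This one-sided Cauchy estimate, together with the upper bound $\limsup_k V_k \le V^*$, gives pointwise convergence of $V_k$ directly. With the limit $\tiV$ in hand, the paper then passes both Conditions 1 and 2 to the limit to obtain $\tiV(x) = \max_a[R(x,a) + \gamma \expects_P \tiV(x')]$, whence $\tiV = V^*$ by uniqueness of the Bellman fixed point in $\sV$. This avoids any need to reason about which action is greedy at finite $k$.
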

Thus any operator which satisfies the conditions of Theorem \ref{thm:optimality_achieving_operators} will eventually yield an optimal greedy policy, assuming an exact representation of the Q-function. 
Condition 2, in particular, states that we may subtract up to (but not including) $\max\nolimits_b Q_k(x,b) - Q_k(x,a)$ from $Q_k(x,a)$ at each iteration. This is exactly the action gap at $(x,a)$, but for $Q_k$, rather than the optimal $Q^*$. For a particular $x$, this implies we may initially devalue the optimal action $a^* := \pi^*(x)$ in favour of the greedy action. But our theorem shows that $a^*$ cannot be undervalued infinitely often, and in fact $Q_k(x, a^*)$ must ultimately reach $V^*(x)$.\footnote{When two or more actions are optimal, we are only guaranteed that \emph{one of them} will ultimately be correctly valued. The ``1-lazy'' operator described below exemplifies this possibility.} The proof of this perhaps surprising result may be found in the appendix. 

To the best of our knowledge, Theorem \ref{thm:optimality_achieving_operators} is the first result to show the convergence of iterates of dynamic programming-like operators without resorting to a contraction argument. Indeed, the conditions of Theorem 1 are particularly weak: we do not require $\cT'$ to be a contraction, nor do we assume the existence of a fixed point (in the Q-function space $\sQ$) of $\cT'$. In fact, the conditions laid out in Theorem \ref{thm:optimality_achieving_operators} characterize the set of optimality-preserving operators on $\sQ$, in the following sense:
\begin{rem}
There exists a single-state MDP $M$ and an operator $\cT'$ with either 
\begin{enumerate}
\item{$\cT' Q(x,a) > \cT Q(x,a)$ or}
\item{$\cT' Q(x,a) < \cT Q(x,a) - \left [ V(x) - Q(x,a) \right ]$,}
\end{enumerate}
and in both cases there exists a $Q_0 \in \sQ$ for which $\lim_{k \to \infty} \max_a (\cT')^k Q_0(x, a) \ne V^*(x)$.
\end{rem}
We note that the above remark does not cover the case where condition (2) is an equality (i.e., $\alpha = 1$). We leave as an open problem the existence of a divergent example for $\alpha = 1$. 

\begin{cor}
The consistent Bellman operator $\cTc$ (\ref{eqn:consistent_bellman_operator_aggregation}) and consistent Q-value interpolation Bellman operator $\cTcq$ (\ref{eqn:consistent_q_averaging_operator}) are optimality-preserving. 
\end{cor}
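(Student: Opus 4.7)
The plan is to apply Theorem~\ref{thm:optimality_achieving_operators} to each of $\cTc$ and $\cTcq$, with the role of the reference operator $\cT$ played by $\cTa$ and $\cTq$ respectively; both are $\gamma$-contractions on $\sQ_{\cZ, \cA}$ whose fixed points are the relevant optimal Q-functions, and the proof of Theorem~\ref{thm:optimality_achieving_operators} uses $\cT$ only through these properties. In each case I would verify conditions (1) and (2) with $\alpha = \gamma \in [0, 1)$.

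For $\cTc$, I would first rewrite (\ref{eqn:consistent_bellman_operator_aggregation}) in the one-step-redistributed form
\[
\cTc Q(z, a) \;=\; \cTa Q(z, a) \;-\; \gamma\, p(z, a)\,\bigl[ V(z) - Q(z, a) \bigr],
\]
where $p(z, a) := \expects_D \expects_P \expects_A \indic{z = z'} \in [0, 1]$ is the self-transition probability of $(z, a)$ in the induced MDP $M'$. This is a one-line rearrangement obtained by using $\max_b Q(z, b) = V(z)$ on the $\indic{z = z'}$ branch. Condition (1) then follows from $V(z) \ge Q(z, a)$ together with $p(z, a) \ge 0$, and condition (2) from $\gamma\,p(z, a) \le \gamma < 1$.

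For $\cTcq$, condition (1) is immediate from (\ref{eqn:consistent_q_averaging_operator}) since $\cTcq Q = \min\{\cTq Q, \cTq' Q\} \le \cTq Q$. For condition (2) I would establish a pointwise lower bound on $\cTq' Q$ as follows. Fix $x'$, let $b^\star := \argmax_b Q(x', b)$, and evaluate the argument of the inner $\max_b$ in $\cTq'$ at $b = b^\star$:
\[
\max_b\!\bigl[ Q(x', b) - A(z \cbar x')(Q(z, b) - Q(z, a))\bigr] \;\ge\; \max_b Q(x', b) \;-\; A(z \cbar x')\bigl[Q(z, b^\star) - Q(z, a)\bigr].
\]
Then bound $Q(z, b^\star) \le V(z)$ to replace the bracket by $V(z) - Q(z, a) \ge 0$, and take $\expects_D \expects_P$ to get $\cTq' Q(z, a) \ge \cTq Q(z, a) - \gamma\,\beta(z, a)\,[V(z) - Q(z, a)]$ with $\beta(z, a) := \expects_D \expects_P A(z \cbar x') \in [0, 1]$. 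Since $V(z) - Q(z, a) \ge 0$, both $\cTq Q(z, a)$ and this lower bound for $\cTq' Q(z, a)$ exceed $\cTq Q(z, a) - \gamma[V(z) - Q(z, a)]$, so the same inequality passes to their minimum $\cTcq Q$, giving condition (2) with $\alpha = \gamma$.

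The main obstacle I anticipate is the lower bound on $\cTq' Q$: the correction term inside the $\max_b$ can change sign with $b$, and one has to evaluate the $\max$ at the $Q(x', \cdot)$-maximizer (not the $Q(z, \cdot)$-maximizer) and only then use $Q(z, b^\star) \le V(z)$, so that the resulting slack lines up with the nonnegative quantity $V(z) - Q(z, a)$ demanded by condition (2). Once this step is in hand, everything else is routine manipulation of $\max$, indicators and iterated expectations.
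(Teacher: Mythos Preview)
Your proposal is correct and matches the paper's approach: the paper states the corollary without proof, leaving it as a direct application of Theorem~\ref{thm:optimality_achieving_operators}, and in the main text it even records the key identity $\cTc Q(x,a) = \cT Q(x,a) - \gamma P(x \cbar x,a)[V(x) - Q(x,a)]$ that you use for $\cTc$. Your verification of conditions (1) and (2) for $\cTcq$, in particular the evaluation of the inner $\max_b$ at $b^\star = \argmax_b Q(x',b)$ followed by $Q(z,b^\star) \le V(z)$, is the right maneuver and fills in exactly the detail the paper omits.
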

In fact, it is not hard to show that the consistent Bellman operator (\ref{eqn:aggregate_bellman_operator}) is a contraction, and thus enjoys even stronger convergence guarantees than those provided by Theorem \ref{thm:optimality_achieving_operators}.
Informally, whenever Condition 2 of the theorem is strengthened to an inequality, we may also expect our operators to be gap-increasing; this is in fact the case for both of our consistent operators.  

To conclude this section, we describe a few operators which satisfy the conditions of Theorem \ref{thm:optimality_achieving_operators}, and are thus optimality-preserving and gap-increasing.
Critically, none of these operators are contractions; one of them, the ``lazy'' operator, also possesses multiple fixed points.

\subsection{Baird's Advantage Learning}

The method of advantage learning was proposed by \citet{baird99reinforcement} as a means of increasing the gap between the optimal and suboptimal actions in the context of residual algorithms applied to continuous time problems.\footnote{Advantage \emph{updating}, also by Baird, is a popular but different idea where an agent maintains both $V$ and $A := Q - V$.} The corresponding operator is
\begin{align*}
\cT' Q(x,a) &= K^{-1} \big [ R(x,a) \oneem + \\
& \gamma^{\Delta_t} \expects_P V(x') + ( K - 1 ) V(x) \big ],
\end{align*}
where $\Delta_t > 0$ is a time constant and $K := C \Delta_t$ with $C > 0$. Taking $\Delta_t = 1$ and $\alpha := 1 - K$, we define a new operator with the same fixed point but a now-familiar form:
\begin{equation*}
\cTal Q(x,a) := \cT Q(x,a) - \alpha \left [ V(x) - Q(x,a) \right ].
\end{equation*}
Note that, while the two operators are motivated by the same principle and share the same fixed point, they are not isomorphic. We believe our version to be more stable in practice, as it avoids the multiplication by the $K^{-1}$ term.
\begin{cor}
For $\alpha \in [0, 1)$, the advantage learning operator $\cTal$ has a unique limit $V_{\textsc{al}} \in \sV$, and $V_{\textsc{al}} = V^*$. 
\end{cor}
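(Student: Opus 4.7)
The plan is to derive this corollary directly from Theorem~\ref{thm:optimality_achieving_operators}, since the operator $\cTal$ has been designed precisely to sit on the boundary of the two-sided bracket that characterizes optimality-preserving operators. So the work reduces to verifying the two inequalities of the theorem and then reading off the conclusion.

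First I would check Condition~1. By definition $\cTal Q(x,a) = \cT Q(x,a) - \alpha[V(x) - Q(x,a)]$, and since $V(x) = \max_b Q(x,b) \ge Q(x,a)$ and $\alpha \ge 0$, the subtracted term is nonnegative, so $\cTal Q(x,a) \le \cT Q(x,a)$. Next I would check Condition~2, which is immediate: $\cTal Q(x,a) = \cT Q(x,a) - \alpha[V(x) - Q(x,a)]$ is exactly the lower bound of the theorem, holding with equality for every $(x,a)$. Since we are given $\alpha \in [0,1)$, the hypotheses of Theorem~\ref{thm:optimality_achieving_operators} are met verbatim.

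Having verified the hypotheses, I would apply the theorem to conclude that $\cTal$ is optimality-preserving. Unpacking this definition: for any initial $Q_0 \in \sQ$, letting $Q_{k+1} := \cTal Q_k$, the limit $\tiV(x) := \lim_{k\to\infty} \max_{a} Q_k(x,a)$ exists, is unique, and equals $V^*(x)$ for every $x \in \cX$. Setting $V_{\textsc{al}} := \tiV$ yields a well-defined element of $\sV$ with $V_{\textsc{al}} = V^*$, which is exactly the claim.

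There is essentially no hard step here; the only minor thing to note is that Theorem~\ref{thm:optimality_achieving_operators} is stated for $\sQ$ (bounded Q-functions), so I would briefly remark that $\cTal$ maps $\sQ$ into itself, which follows because $\cT$ does and the correction term $\alpha[V(x)-Q(x,a)]$ is bounded whenever $Q$ is. The main conceptual content of the corollary is really contained in the underlying theorem, whose proof handles the non-contractive nature of $\cTal$ (note $\cTal$ is not a contraction for $\alpha$ near $1$) via the two-sided sandwich rather than a Banach fixed-point argument.
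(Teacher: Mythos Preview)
Your proposal is correct and is exactly the argument the paper intends: the corollary is stated as a direct consequence of Theorem~\ref{thm:optimality_achieving_operators}, and verifying Conditions~1 and~2 for $\cTal$ (the first from $V(x)\ge Q(x,a)$ and $\alpha\ge 0$, the second with equality by definition) is all that is needed. The paper gives no separate proof beyond this, so your write-up matches its approach.
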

While our consistent Bellman operator originates from different principles, there is in fact a close relationship between it and the advantage learning operator. Indeed,
we can rewrite (\ref{eqn:consistent_bellman_operator}) as
\begin{equation*}
\cTc Q(x,a) = \cT Q(x,a) - \gamma P(x \cbar x, a) \left [ V(x) - Q(x,a) \right ] ,
\end{equation*}
which corresponds to advantage learning with a $(x,a)$-dependent $\alpha$ parameter.

\subsection{Persistent Advantage Learning}

In domains with a high temporal resolution, it may be advantageous to encourage greedy policies which infrequently switch between actions --- to encourage a form of \emph{persistence}. 
We define an operator which favours repeated actions:
{\small
\begin{equation*}
\cTpal Q(x,a) := \max \left \{ \cTal Q(x,a), R(x,a) + \gamma \expects_P Q(x', a) \right \} .
\end{equation*}
}
Note that the second term of the $\max$ can also be written as
\begin{equation*}
\cT Q(x,a) - \gamma \expects_P \left [ V(x') - Q(x',a) \right ].
\end{equation*}
As we shall see below, \emph{persistent advantage learning} achieves excellent performance on Atari 2600 games.

\subsection{The Lazy Operator} 

As a curiosity, consider the following operator with $\alpha \in [0, 1)$: 
\begin{equation*}
\cT' Q(x,a) := \left \{ \begin{array}{lll}
    \hspace{-1pt} Q(x,a) & \text{if } Q(x,a) \hspace{-0.8em} &\le \hspace{-1pt} \cT Q(x,a) \text{ and} \\
    \hspace{-1pt} & \cT Q(x,a) \hspace{-0.8em} &\le \hspace{-1pt} \alpha V(x) \oneem + \\
    & &(1 - \alpha) Q(x,a),\\
    \hspace{-1pt} \cT Q(x,a) & \multicolumn{2}{l}{\text{otherwise.}}
    \end{array} \right .
\end{equation*}
This $\alpha$-lazy operator only updates $Q$-values when this would affect the greedy policy. And yet, Theorem \ref{thm:optimality_achieving_operators} applies! Hence $\cT'$ is optimality-preserving and gap-increasing, even though it may possess a multitude of fixed points in $\sQ$. Of note, while Theorem \ref{thm:optimality_achieving_operators} does not apply to the $1$-lazy operator, the latter is also optimality-preserving; in this case, however, we are only guaranteed that one optimal action remain optimal.  

\section{Experimental Results on Atari 2600}

We evaluated our new operators on the Arcade Learning Environment (ALE; \citenp{bellemare13arcade}), a reinforcement learning interface to Atari 2600 games. In the ALE, a frame lasts $1/60^{th}$ of a second, with actions typically selected every four frames. Intuitively, the ALE setting is related to continuous domains such as the bicycle domain studied above, in the sense that each individual action has little effect on the game.

For our evaluation, we trained agents based on the Deep Q-Network (DQN) architecture of \citet{mnih15human}. DQN acts according to an $\epsilon$-greedy policy over a learned neural-network Q-function. DQN uses an experience replay mechanism to train this Q-function, performing gradient descent on the sample squared error $\Delta_Q(x,a)^2$, where 
\begin{equation*}
\Delta Q(x,a) := R(x,a) + \gamma V(x') - Q(x,a), 
\end{equation*}
where $(x, a, x')$ is a previously observed transition. We define the corresponding errors for our operators as 
\begin{align*}
\Delta_{\textsc{al}} Q(x,a) &:= \Delta Q(x,a) - \alpha[V(x) - Q(x,a)], \\ 
\Delta_{\textsc{pal}} Q(x,a) &:= \max \Big \{ \Delta_{\textsc{al}} Q(x,a), \\
& \qquad \qquad \Delta Q(x,a) - \alpha[V(x') - Q(x',a)] \Big \} ,
\end{align*}
where we further parametrized the weight given to $Q(x',a)$ in persistent advantage learning (compare with $\cT_{\textsc{pal}}$).

Our first experiment used one of the new ALE standard versions, which we call here the \emph{Stochastic Minimal} setting. This setting includes stochasticity applied to the Atari 2600 controls, no death information, and a per-game minimal action set. 
Specifically, at each frame (not time step) the environment \emph{accepts} the agent's action with probability $1-p$, or \emph{rejects} it with probability $p$ (here, $p = 0.25$). If an action is rejected, the previous frame's action is repeated. In our setting the agent selects a new action every four frames: the stochastic controls therefore approximate a form of reaction delay.
As evidenced by a lower DQN performance, Stochastic Minimal is more challenging than previous settings. 

We trained each agent for 100 million frames using either regular Bellman updates, advantage learning (A.L.), or persistent advantage learning (P.A.L.). We optimized the $\alpha$ parameters over 5 training games and tested our algorithms on 55 more games using 10 independent trials each.

For each game, we performed a paired $t$-test (99\% C.I.) on the post-training evaluation scores obtained by our algorithms and DQN. A.L. and P.A.L. are statistically better than DQN on \textbf{37} and \textbf{35} out of 60 games, respectively; both perform worse on \textbf{one} (\textsc{Atlantis}, \textsc{James Bond}). P.A.L. often achieves higher scores than A.L., and is statistically better on \textbf{16} games and worse on \textbf{6}. These results are especially remarkable given that the only difference between DQN and our operators is a simple modification to the update rule. 

For comparison, we also trained agents using the \emph{Original DQN} setting \citep{mnih15human}, in particular using a longer 200 million frames of training. Figure \ref{fig:atari_learning_curves} depicts learning curves for two games, \textsc{Asterix} and \textsc{Space Invaders}. These curves are representative of our results, rather than exceptional: on most games, advantage learning outperforms Bellman updates, and persistent advantage learning further improves on this result. Across games, the median score improvement over DQN is \textbf{8.4\%} for A.L. and \textbf{9.1\%} for P.A.L., while the average score improvement is respectively \textbf{27.0\%} and \textbf{32.5\%}. Full experimental details are provided in the appendix.
\begin{figure}
\centering{
\includegraphics[width=1.6in]{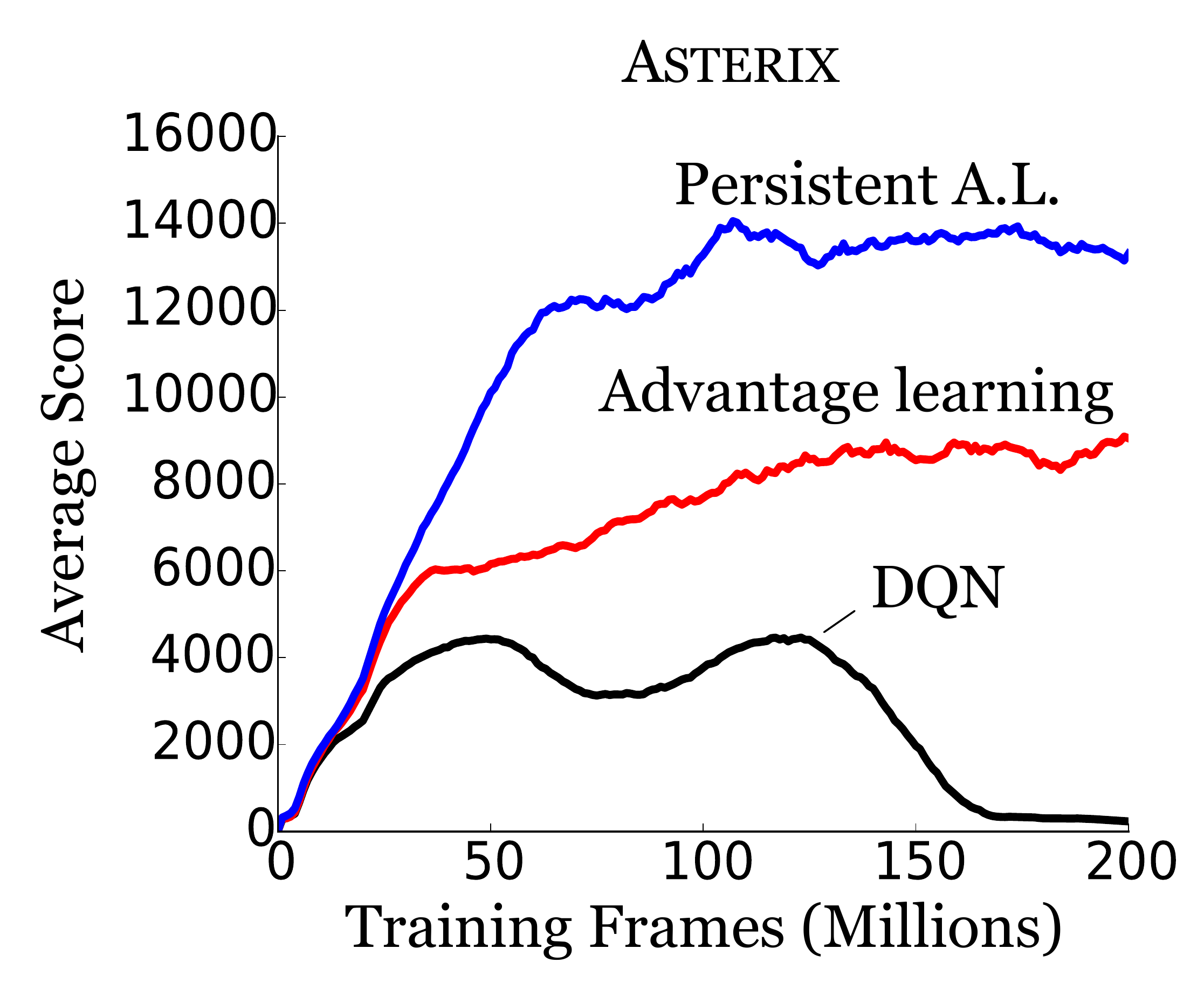}
\includegraphics[width=1.6in]{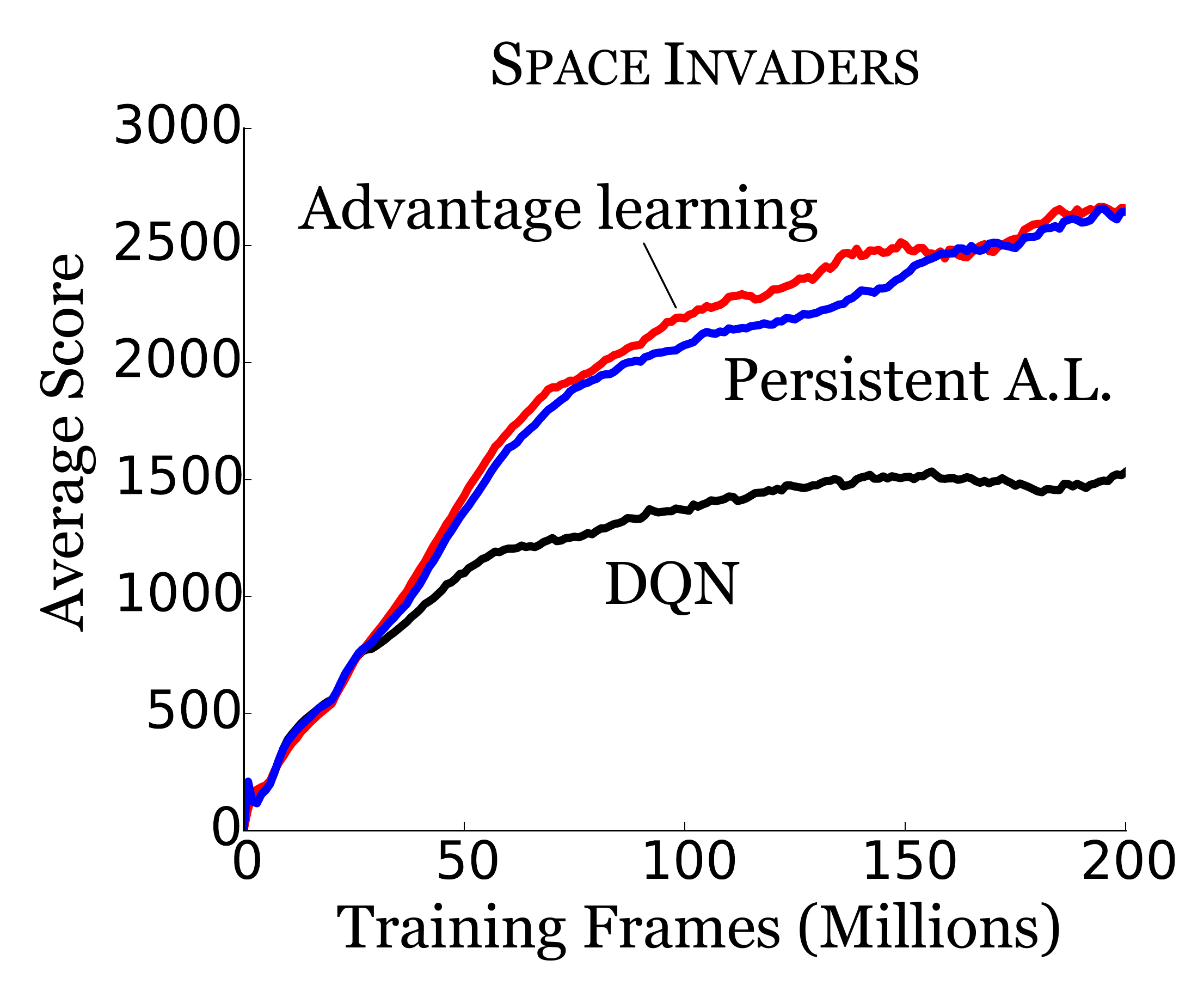}
}
\caption{Learning curves for two Atari 2600 games in the Original DQN setting.\label{fig:atari_learning_curves}}
\end{figure}

The learning curve for \textsc{Asterix} illustrates the poor performance of DQN on certain games. Recently, \citet{vanhasselt15deep} argued that this poor performance stems from the instability of the Q-functions learned from Bellman updates, and provided conclusive empirical evidence to this effect. In the spirit of their work, we compared our learned Q-functions on a single trajectory generated by a trained DQN agent playing \textsc{Space Invaders} in the Original DQN setting. For each Q-function and each state $x$ along the trajectory, we computed $V(x)$ as well as the action gap at $x$.
\begin{figure}
\centering{
\includegraphics[width=1.6in]{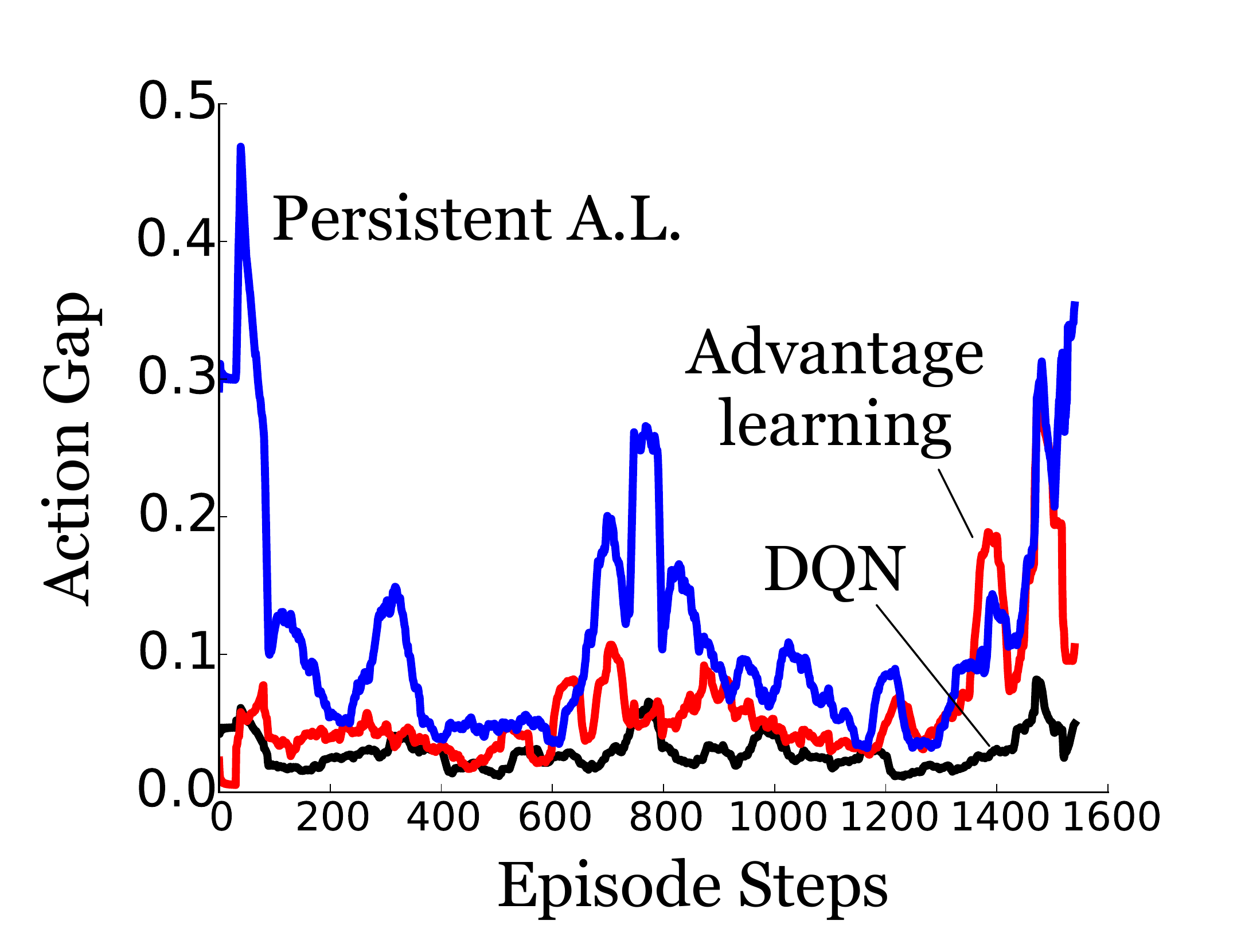}
\includegraphics[width=1.6in]{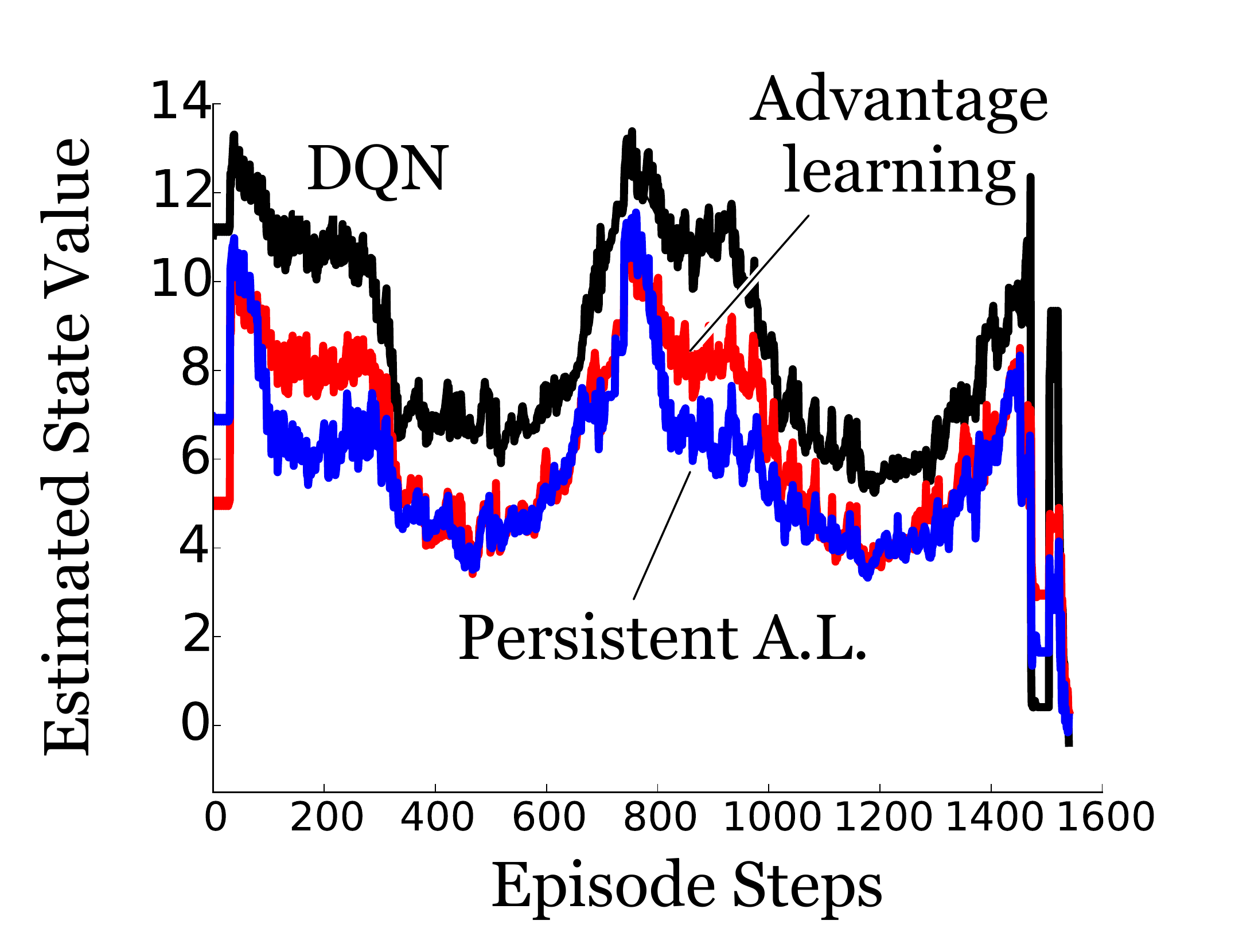}
}
\caption{Action gaps (\textbf{left}) and value functions (\textbf{right}) for a single episode of \textsc{Space Invaders} (Original DQN setting). 
Our operators yield markedly increased action gaps and lower values.
\label{fig:atari_probe_results}}
\end{figure}

The value functions and action gaps resulting from this experiment\footnote{Videos: https://youtu.be/wDfUnMY3vF8} are depicted in Figure \ref{fig:atari_probe_results}. As expected, the action gaps are significantly greater for both of our operators, in comparison to the action gaps produced by DQN. Furthermore, the value estimates are themselves lower, and correspond to more realistic estimates of the true value function.
In their experiments, van Hasselt et al.
observed a similar effect on the value estimates when replacing the Bellman updates with \emph{Double Q-Learning} updates, one of many solutions 
recently proposed to mitigate the negative impact of statistical bias in value function estimation \cite{vanhasselt10double,azar11speedy,lee13biascorrected}.
This bias is positive and is a consequence of the max term in the Bellman operator. We hypothesize that the lower value estimates observed in Figure \ref{fig:atari_probe_results} are also a consequence of bias reduction. Specifically, increased action gaps are consistent with a bias reduction: it is easily shown that the value estimation bias is strongest when Q-values are close to each other. 
If our hypothesis holds true, the third benefit of increasing the action gap is thus to \emph{mitigate the statistical bias of Q-value estimates}. 

\section{Open Questions}

\textbf{Weaker Conditions for Optimality.} At the core of our results lies the redefinition of Q-values in order to facilitate approximate value estimation. Theorem \ref{thm:optimality_achieving_operators} and our empirical results indicate that there are many practical operators which do not preserve suboptimal Q-values. Naturally, preserving the optimal value function $V$ is itself unnecessary, as long as the iterates converge to a Q-function $\tilde Q$ for which $\argmax\nolimits_{a} \tilde Q(x,a) = \pi^*(x)$. It may well be that even weaker conditions for optimality exist than those required by Theorem \ref{thm:optimality_achieving_operators}. At the present, however, our proof technique does not appear to extend to this case.  

\noindent \textbf{Statistical Efficiency of New Operators.} Advantage learning (as given by our redefinition) may be viewed as a generalization of the consistent Bellman operator when $P(\cdot \cbar x, a)$ is unknown or irrelevant. In this light, we ask: is there a probabilistic interpretation to advantage learning? We further wonder about the \emph{statistical efficiency} of the consistent Bellman operator: is it ever less efficient than the usual Bellman operator, when considering the probability of misclassifying the optimal action? Both of these answers might shed some light on the differences in performance observed in our experiments. 

\noindent \textbf{Maximally Efficient Operator.} Having revealed the existence of a broad family of optimality-preserving operators, we may now wonder which of these operators, if any, should be preferred to the Bellman operator. Clearly, there are trivial MDPs on which any optimality-preserving operator performs equally well. However, we may ask whether there is, for a given MDP, a ``maximally efficient'' optimality-preserving operator; and whether a learning agent can benefit from simultaneously searching for this operator while estimating a value function. 

\section{Concluding Remarks}

We presented in this paper a family of optimality-preserving operators, of which the consistent Bellman operator is a distinguished member. At the center of our pursuits lay the desire to increase the action gap; we showed through experiments that this gap plays a central role in the performance of greedy policies over approximate value functions, and how significantly increased performance could be obtained by a simple modification of the Bellman operator.
We believe our work highlights the inadequacy of the classical Q-function at producing reliable policies in practice, calls into question the traditional policy-value relationship in value-based reinforcement learning, and illustrates how revisiting the concept of value itself can be fruitful. 

\section{Acknowledgments}

The authors thank Michael Bowling, Csaba Szepesv\'ari, Craig Boutilier, Dale Schuurmans, Marty Zinkevich, Lihong Li, Thomas Degris, and Joseph Modayil for useful discussions, as well as the anonymous reviewers for their excellent feedback.

\bibliographystyle{aaai}
\bibliography{increasing-action-gap}

\newpage

\section{Appendix}

This appendix is divided into three sections. In the first section we present the proofs of our theoretical results. In the second we provide experimental details and additional results for the Bicycle domain. In the final section we provide details of our experiments on the Arcade Learning Environment, including results on 60 games.

\section{Theoretical Results}\label{sec:theory}

\begin{lem}\label{thm:convergence_of_gap_increasing_operators}
Let $Q \in \sQ$ and $\pi^Q$ be the policy greedy with respect to $Q$. Let $\cT'$ be an operator with the properties that, for all $x \in \cX, a \in \cA$, 
\begin{enumerate}
    \item{$\cT' Q(x,a) \le \cT Q(x,a)$, and}
    \item{$\cT'Q(x,\pi^Q(x)) = \cT Q(x,\pi^Q(x))$.} 
\end{enumerate}
Consider the sequence $Q_{k+1} := \cT' Q_k$ with $Q_0 \in \sQ$, and let $V_k(x) := \max_a Q_k(x, a)$.
Then the sequence $( V_k : k \in \bN)$ converges, and furthermore, for all $x \in \cX$,
\begin{equation*}
\lim_{k \to \infty} V_k(x) \le V^*(x).
\end{equation*}
\end{lem}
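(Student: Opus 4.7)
The plan is to sandwich each $V_{k+1}$ between a ``greedy-policy'' Bellman update and the optimality operator, and to extract from this sandwich a geometric bound on how much $V$ can ever decrease. Let $\pi_k := \pi^{Q_k}$ so that $V_k(x) = Q_k(x,\pi_k(x))$. First, using hypothesis (2) at $Q_k$ I would write
\begin{equation*}
V_{k+1}(x) \;\ge\; Q_{k+1}(x,\pi_k(x)) \;=\; \cT' Q_k(x,\pi_k(x)) \;=\; \cT Q_k(x,\pi_k(x)) \;=\; R(x,\pi_k(x)) + \gamma\expects_P V_k(x').
\end{equation*}
Next, using hypothesis (1) at $Q_{k-1}$ and the fact that $V_k(x) = \cT' Q_{k-1}(x,\pi_k(x))$,
\begin{equation*}
V_k(x) \;\le\; \cT Q_{k-1}(x,\pi_k(x)) \;=\; R(x,\pi_k(x)) + \gamma\expects_P V_{k-1}(x').
\end{equation*}
Subtracting gives the key recursion
\begin{equation*}
V_{k+1}(x) - V_k(x) \;\ge\; \gamma\,\expects_{x'\sim P(\cdot\cbar x,\pi_k(x))}\!\big[V_k(x') - V_{k-1}(x')\big].
\end{equation*}

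\paragraph{Bounding decreases, then bounding above.}
Writing $\eta_k := \sup_x (V_k(x) - V_{k+1}(x))^+$, the recursion together with Jensen's inequality $(-\expects Y)^+ \le \expects (Y)^-$ yields $\eta_k \le \gamma\,\eta_{k-1}$, hence $\eta_k \le \gamma^k\eta_0$ and $\sum_k \eta_k < \infty$. Separately, hypothesis (1) and the observation $\max_a \cT Q = \cT V$ give $V_{k+1} \le \cT V_k$, and so by the monotonicity and contractive character of the Bellman operator,
\begin{equation*}
V_k \;\le\; \cT V_{k-1} \;\le\; \cdots \;\le\; \cT^k V_0,\qquad \|\cT^k V_0 - V^*\|_\infty \le \gamma^k \|V_0 - V^*\|_\infty,
\end{equation*}
which gives $V_k(x) \le V^*(x) + \gamma^k \|V_0 - V^*\|_\infty$ for every $x$ and in particular $\limsup_k V_k(x) \le V^*(x)$.

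\paragraph{Pointwise convergence via telescoping.}
To finish, I would fix $x$ and telescope $V_k(x) = V_0(x) + \sum_{i=0}^{k-1}(\delta_i^+(x) - \delta_i^-(x))$, where $\delta_i := V_{i+1} - V_i$. The series $\sum_i \delta_i^-(x)$ converges absolutely because $\delta_i^-(x) \le \eta_i \le \gamma^i\eta_0$. Since $V_k(x)$ is bounded above by the previous paragraph, the partial sums $\sum_{i<k}\delta_i^+(x)$ must also remain bounded, so $\sum_i \delta_i^+(x)$ converges as well. Therefore $V_k(x)$ converges pointwise, and by the $\limsup$ bound its limit lies below $V^*(x)$.

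\paragraph{Main obstacle.}
The only non-routine step is recognising that properties (1) and (2), applied at two consecutive iterations $k-1$ and $k$ with the \emph{same} greedy action $\pi_k(x)$, produce the contractive recursion on $V_{k+1} - V_k$; once that is in hand, everything else is geometric series bookkeeping together with standard facts about the Bellman operator. A minor subtlety is that $\cT'$ need not be a contraction and need not have a fixed point in $\sQ$, so the proof must avoid any Banach fixed-point argument on $Q_k$ itself and argue purely at the level of $V_k$.
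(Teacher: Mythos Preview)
Your proof is correct and follows essentially the same route as the paper's: both derive the key recursion $V_{k+1}(x) - V_k(x) \ge \gamma\,\expects_{P(\cdot\cbar x,\pi_k(x))}\big[V_k(x') - V_{k-1}(x')\big]$ by combining hypothesis (2) at $Q_k$ with hypothesis (1) at $Q_{k-1}$ using the common action $\pi_k(x)$, and then extract from it a geometric bound on the possible decrease of $V_k$. The only cosmetic differences are that the paper unrolls the recursion through a $k$-step composite kernel $P_{1:k}$ and finishes with an $\epsilon$--$\delta$ argument that $\liminf_k V_k = \limsup_k V_k$, while you take sup-norms at each step ($\eta_k \le \gamma\,\eta_{k-1}$) and close with a telescoping-series argument; both packagings yield the same conclusion.
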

\begin{proof}
By Condition 1, we have that
\begin{align*}
\limsup_{k \to \infty} Q_k(x,a) &= \limsup_{k \to \infty} (\cT')^k Q_0(x,a) \\
&\le \limsup_{k \to \infty} \cT^k Q_0(x,a) \\
&= Q^*(x,a),
\end{align*}
since $\cT$ has a unique fixed point. From this we deduce the second claim. Now, for a given $x \in \cX$, let $a_k := \pi_k(x) := \argmax_a Q_k(x,a)$ and $P_k := P(\cdot \cbar x, a_k)$. We have 
\begin{align}
V_{k+1}(x) &\ge Q_{k+1}(x, a_k) = \cT' Q_k(x, a_k) \nonumber \\
&= \cT Q_k(x, a_k) \nonumber \\
&= \cT Q_{k-1} (x, a_k) + \gamma \expects_{P_k} \left [ V_k(x') - V_{k-1}(x') \right ]\nonumber \\
&\ge \cT'Q_{k-1}(x, a_k) + \gamma \expects_{P_k} \left [ V_k(x') - V_{k-1}(x') \right ] \nonumber \\
&= V_k(x) + \gamma \expects_{P_k} \left [ V_k(x') - V_{k-1}(x') \right ] \nonumber ,
\end{align}
where in the second line we used Condition 2 of the lemma, and in the third the definition of $\cT$ applied to $Q_k$. Thus we have
\begin{equation*}
V_{k+1}(x) - V_k(x) \ge \gamma \expects_{P_k} \left [ V_k(x') - V_{k-1}(x') \right ],
\end{equation*}
and by induction
\begin{equation}
V_{k+1}(x) - V_k(x) \ge \gamma^k \expects_{P_{1:k}} \left [ V_1(x') - V_0(x') \right ] \label{eqn:bound_on_delta_V},
\end{equation}
where $P_{1:k} := P_k P_{k-1} \dots P_1$ is the $k$-step transition kernel at $x$ derived from the nonstationary policy $\pi_k \pi_{k-1} \dots \pi_1$. 
Let $\tiV(x) := \limsup_{k \to \infty} V_k(x)$. We now show that $\liminf_{k \to \infty} V_k(x) = \tiV(x)$ also. First note that Conditions 1 and 2, together with the boundedness of $V_0$, ensure that $V_1$ is also bounded and thus $\infnorm{V_1 - V_0} < \infty$. 
By definition, for any $\delta > 0$ and $n \in \bN$, $\exists k \ge n$ such that $V_k(x) > \tiV(x) - \delta$. Since $P_{1:k}$ is a nonexpansion in $\infty$-norm, we have 
\begin{align*}
V_{k+1}(x) - V_k(x) &\ge -\gamma^k \infnorm{V_1 - V_0} \\
&\ge -\gamma^n \infnorm{V_1 - V_0} =: -\epsilon, \\
\end{align*}
and for all $t \in \bN$,
\begin{equation*}
V_{k+t}(x) - V_k(x) \ge - \sum_{i=0}^{t-1} \gamma^i \epsilon \ge \frac{-\epsilon}{1 - \gamma},
\end{equation*}
such that
\begin{equation*}
\inf_{t \in \bN} V_{k+t}(x) \ge \tiV(x) - \delta - \frac{\epsilon}{1 - \gamma}.
\end{equation*}
It follows that for any $x \in \cX$ and $\delta' > 0$, we can choose an $n \in \bN$ to make $\epsilon$ small enough such that for all $k \ge n$, $V_k(x) > \tiV(x) - \delta'$. Hence 
\begin{equation*}
\liminf_{k \to \infty} V_k(x) = \tiV(x),
\end{equation*}
and thus $V_k(x)$ converges. 
\end{proof}

\begin{lem}\label{lem:V_k_is_bounded}
Let $\cT'$ be an operator satisfying the conditions of Lemma \ref{thm:convergence_of_gap_increasing_operators}, and let $\Rmax := \max\nolimits_{x,a} R(x,a)$. Then for all $x \in \cX$ and all $k \in \bN$,
\begin{equation}\label{eqn:V_k_bounded}
|V_k(x)| \le \frac{1}{1 - \gamma} \Big [ 2 \infnorm{V_0} + \Rmax \Big ] .
\end{equation}
\end{lem}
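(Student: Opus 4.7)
The plan is to prove the bound by separately establishing an upper and a lower bound on $V_k(x)$, both of magnitude at most $[2\|V_0\|_\infty + \|R\|_\infty]/(1-\gamma)$.

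For the upper bound I would exploit only Condition 1 of Lemma \ref{thm:convergence_of_gap_increasing_operators}. Since $\cT' Q_k(x,a) \le \cT Q_k(x,a) = R(x,a) + \gamma \expects_P V_k(x')$, taking a maximum over $a$ yields
\begin{equation*}
V_{k+1}(x) \le \Rmax + \gamma \infnorm{V_k}.
\end{equation*}
A straightforward induction then gives $\infnorm{V_k} \le \gamma^k \infnorm{V_0} + (1-\gamma^k)\Rmax/(1-\gamma)$, which is bounded above by $\infnorm{V_0} + \Rmax/(1-\gamma) \le [2\infnorm{V_0}+\Rmax]/(1-\gamma)$.

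The lower bound is the main obstacle, because Condition 2 only pins down $\cT'$ at the greedy action, so we cannot run the same symmetric argument. Here I would recycle the key inequality (\ref{eqn:bound_on_delta_V}) from the proof of Lemma \ref{thm:convergence_of_gap_increasing_operators}: for every $x \in \cX$ and $k \in \bN$,
\begin{equation*}
V_{k+1}(x) - V_k(x) \ge \gamma^k \expects_{P_{1:k}} [V_1(x') - V_0(x')] \ge -\gamma^k \infnorm{V_1 - V_0}.
\end{equation*}
Telescoping this from $0$ to $k-1$ and summing the geometric series yields
\begin{equation*}
V_k(x) \ge V_0(x) - \frac{\infnorm{V_1 - V_0}}{1-\gamma} \ge -\infnorm{V_0} - \frac{\infnorm{V_1 - V_0}}{1-\gamma}.
\end{equation*}

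The final step is to control $\infnorm{V_1 - V_0}$ in terms of the quantities in the desired bound. Using the one-step upper bound already established, $\infnorm{V_1} \le \Rmax + \gamma \infnorm{V_0}$, so by the triangle inequality $\infnorm{V_1 - V_0} \le \Rmax + (1+\gamma)\infnorm{V_0}$. Substituting and simplifying $1 + (1+\gamma)/(1-\gamma) = 2/(1-\gamma)$ gives
\begin{equation*}
V_k(x) \ge -\frac{2\infnorm{V_0} + \Rmax}{1-\gamma},
\end{equation*}
which matches the upper bound in absolute value and completes the proof. No additional structure from Condition 2 is needed beyond what is already packaged in (\ref{eqn:bound_on_delta_V}); the real work has been done in Lemma \ref{thm:convergence_of_gap_increasing_operators}, and this lemma is essentially a bookkeeping corollary that makes the uniform bound explicit in terms of $\infnorm{V_0}$ and $\Rmax$ alone.
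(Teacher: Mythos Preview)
Your proof follows the same route as the paper: use the telescoped inequality (\ref{eqn:bound_on_delta_V}) from Lemma~\ref{thm:convergence_of_gap_increasing_operators} for the lower bound, control $\infnorm{V_1 - V_0}$ in terms of $\Rmax$ and $\infnorm{V_0}$, and obtain the upper bound by a one-step recursion coming from Condition~1. The paper organizes the induction slightly differently (it first establishes the lower bound, then inducts on the full two-sided statement $|V_k(x)|\le C$), but the content is identical.

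One small slip to fix: from the one-sided inequality $V_{k+1}(x)\le \Rmax+\gamma\infnorm{V_k}$ alone you cannot induct on $\infnorm{V_k}$, since that requires a matching lower bound on $V_{k+1}$. Either run the upper-bound recursion on $\sup_x V_k(x)$ instead of $\infnorm{V_k}$ (which is what Condition~1 actually gives), or supplement it with $V_{k+1}(x)\ge \cT Q_k(x,a_k)\ge -\Rmax-\gamma\infnorm{V_k}$ from Condition~2; the same remark applies to your use of ``$\infnorm{V_1}\le \Rmax+\gamma\infnorm{V_0}$'' in the lower-bound step. With either fix the argument goes through unchanged.
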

\begin{proof}
Following the derivation of Lemma \ref{thm:convergence_of_gap_increasing_operators}, we have
\begin{align*}
V_{k+1}(x) - V_0(x) &\ge -\sum_{i=1}^k \gamma^i \infnorm{V_1 - V_0} \\
&\ge \frac{-1}{1 - \gamma} \infnorm{V_1 - V_0}. 
\end{align*}
By the same derivation, for $a_0 := \argmax_a Q_0(x, a)$ we have
\begin{equation*}
V_1(x) \ge \cT Q_0(x, a_0).
\end{equation*}
But then
\begin{equation*}
V_1(x) - V_0(x) \ge R(x, a_0) + \gamma \expects_{P_0} V_0(x') - V_0(x),
\end{equation*}
from which the lower bound follows. Now let $P_k$ be defined as in the proof of Lemma \ref{thm:convergence_of_gap_increasing_operators}, and assume the upper bound of (\ref{eqn:V_k_bounded}) holds up to $k \in \bN$. Then 
\begin{align*}
V_{k+1}(x) &= \max\nolimits_a Q_{k+1}(x, a) = \max\nolimits_a \cT' Q_k(x,a) \\
&\le \max\nolimits_a \cT Q_k(x,a) \\
&= \max\nolimits_a \left [ R(x,a) + \gamma \expects_{P_k} V_k(x') \right ] \\
&\le \Rmax + \gamma \infnorm{V_k} \\
&\le \Rmax + \frac{\gamma}{1 - \gamma} \left [ 2 \infnorm{V_0} + \Rmax \right ] \\
&\le \frac{1}{1 - \gamma} \left [ 2 \infnorm{V_0} + \Rmax \right ],
\end{align*}
and combined with the fact that (\ref{eqn:V_k_bounded}) holds for $k = 0$ this proves the upper bound. 
\end{proof}

\begin{thm}
Let $\cT$ be the Bellman operator ((1) in the main text). Let $\cT'$ be an operator with the property that there exists an $\alpha \in [0, 1)$ such that for all $Q \in \sQ$, $x \in \cX, a \in \cA$, and letting $V(x) := \max\nolimits_b Q(x,b)$,
\begin{enumerate}
\item{$\cT' Q(x, a) \le \cT Q(x,a)$, and}
\item{$\cT' Q(x, a) \ge \cT Q(x,a) - \alpha \left [ V(x) - Q(x,a) \right ]$.}
\end{enumerate}
Consider the sequence $Q_{k+1} := \cT' Q_k$ with $Q_0 \in \sQ$, and $V_k(x) := \max_a Q_k(x,a)$. Then $\cT'$ is \emph{optimality-preserving}: for all $x \in \cX$, $(V_k(x) : k \in \bN)$ converges, 
\begin{equation*}
\lim_{k \to \infty} V_k(x) = V^*(x), 
\end{equation*}
and
\begin{equation*}
Q^*(x,a) < V^*(x) \, \implies \, \limsup_{k \to \infty} Q_k(x,a) < V^*(x).
\end{equation*}
Furthermore, $\cT'$ is also \emph{gap-increasing}:
\begin{equation*}
\liminf_{k \to \infty} \big [ V_k(x) - Q_k(x,a) \big ] \ge V^*(x) - Q^*(x,a) .
\end{equation*}
\end{thm}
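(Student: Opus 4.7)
My plan is to first invoke Lemma \ref{thm:convergence_of_gap_increasing_operators} to secure convergence of $V_k(x)$ together with the upper bounds $\tiV(x) \le V^*(x)$ and $\limsup_k Q_k(x,a) \le Q^*(x,a)$, and then to close the gap $\tiV = V^*$ by tracking the Q-value at the \emph{optimal} action via a one-step $\gamma$-contraction along $\pi^*$.

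As a preliminary, I would check that the hypotheses of this theorem imply those of Lemma \ref{thm:convergence_of_gap_increasing_operators}. Condition 1 is shared verbatim. Evaluating Condition 2 at the greedy action $a = \pi^Q(x)$ makes the bracket $V(x) - Q(x,a)$ vanish, leaving $\cT' Q(x,\pi^Q(x)) \ge \cT Q(x,\pi^Q(x))$; together with Condition 1 this forces equality, which is precisely Condition 2 of the lemma. The lemma therefore delivers a well-defined limit $\tiV(x) := \lim_k V_k(x) \le V^*(x)$ along with $\limsup_k Q_k(x,a) \le Q^*(x,a)$, and Lemma \ref{lem:V_k_is_bounded} furnishes the uniform boundedness I will need for dominated convergence inside expectations.

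The crux is to show $\tiV \ge V^*$. For fixed $x$ and $a^* := \pi^*(x)$, I would introduce
\begin{equation*}
\delta_k(x) := V^*(x) - V_k(x), \qquad \eta_k(x) := V^*(x) - Q_k(x,a^*),
\end{equation*}
and, using the Bellman equation $V^*(x) = R(x,a^*) + \gamma\,\expects_P V^*(x')$ with $P = P(\cdot \cbar x, a^*)$, rewrite Condition 2 at $(x,a^*)$ as the recursion
\begin{equation*}
\eta_{k+1}(x) \;\le\; \gamma\,\expects_P \delta_k(x') + \alpha\,[\eta_k(x) - \delta_k(x)],
\end{equation*}
in which the bracket equals the nonnegative quantity $V_k(x) - Q_k(x,a^*)$. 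Passing to $\limsup_k$ on both sides (using bounded convergence inside $\expects_P$, with $\delta_k \to \bar\delta := V^* - \tiV \ge 0$) and writing $\bar\eta := \limsup_k \eta_k$ gives
\begin{equation*}
(1-\alpha)\,\bar\eta(x) + \alpha\,\bar\delta(x) \;\le\; \gamma\,\expects_P \bar\delta(x').
\end{equation*}
The pointwise inequality $\eta_k \ge \delta_k$ lifts to $\bar\eta \ge \bar\delta$, so replacing $\bar\eta$ by the smaller $\bar\delta$ on the left yields the one-step contraction $\bar\delta(x) \le \gamma\,\expects_P \bar\delta(x')$. Taking $\sup_x$ and using $\gamma < 1$ forces $\sup_x \bar\delta(x) = 0$, hence $\tiV \equiv V^*$.

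The remaining two claims are essentially free. For optimality-preservation, $\limsup_k Q_k(x,a) \le Q^*(x,a) < V^*(x) = \tiV(x)$ for any suboptimal $a$. For the gap-increasing property, since $V_k \to V^*$ converges,
\begin{equation*}
\liminf_k[V_k(x) - Q_k(x,a)] = V^*(x) - \limsup_k Q_k(x,a) \ge V^*(x) - Q^*(x,a).
\end{equation*}
The principal obstacle I anticipate is the cancellation inside the recursion: the $\alpha$-slack in Condition 2 must be recognized as carrying exactly $V_k(x) - Q_k(x,a^*)$, so that after taking $\limsup$ the otherwise-stubborn $\bar\eta$-term can be absorbed and one recovers a contraction on $\bar\delta$ alone. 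Without this observation the same manipulation only yields a coupled inequality between $\bar\eta$ and $\bar\delta$ that does not self-close.
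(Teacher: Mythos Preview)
Your proposal is correct, and it takes a genuinely different route from the paper's proof. Both begin identically, checking that Conditions 1--2 here specialize at the greedy action to the hypotheses of Lemma~\ref{thm:convergence_of_gap_increasing_operators}, and both finish the ``free'' parts (suboptimal actions stay suboptimal, gap-increasing) in the same way from $\limsup_k Q_k \le Q^*$ and $V_k \to V^*$.

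The divergence is in how $\tiV = V^*$ is established. The paper defines $\tiQ(x,a) := \limsup_k Q_k(x,a)$ and sandwiches it: Condition~1 plus a reverse-Fatou / max-limsup swap gives $\tiQ \le \cT\tiQ$; Condition~2, after taking $\limsup$ and rearranging, gives $\tiV(x) \ge \max_a \cT\tiQ(x,a)$. Together these force $\tiV$ to satisfy the Bellman optimality equation on $\sV$, hence $\tiV = V^*$ by uniqueness of that fixed point. Your argument instead fixes $a^* = \pi^*(x)$, writes the error variables $\delta_k = V^* - V_k$ and $\eta_k = V^* - Q_k(\cdot,a^*)$, and from Condition~2 at $a^*$ extracts the limiting inequality $(1-\alpha)\bar\eta + \alpha\bar\delta \le \gamma P^{\pi^*}\bar\delta$; then $\bar\eta \ge \bar\delta$ collapses this to $\bar\delta \le \gamma P^{\pi^*}\bar\delta$, and a sup-norm contraction kills $\bar\delta$. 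Your route is arguably cleaner: because $\delta_k$ genuinely converges you pass limits through $\expects_P$ by dominated convergence alone, avoiding the paper's more delicate $\limsup$/$\max$/$\expects_P$ interchanges; and you never need the intermediate object $\tiQ$. The paper's route, on the other hand, yields the slightly stronger structural statement that $\tiV$ itself satisfies the full Bellman equation $\tiV(x) = \max_a[R(x,a) + \gamma\expects_P \tiV(x')]$, not merely that it is dominated along the single policy $\pi^*$. One small point worth making explicit in your write-up: the final step $\sup_x \bar\delta(x) \le \gamma \sup_x \bar\delta(x)$ needs $\sup_x \bar\delta(x) < \infty$, which follows from Lemma~\ref{lem:V_k_is_bounded} together with boundedness of $V^*$.
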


\begin{proof}
Note that these conditions imply the conditions of Lemma \ref{thm:convergence_of_gap_increasing_operators}. Thus for all $x \in \cX$, $(V_k(x) : k \in \bN)$ converges to the limit $\tiV(x) \le V^*(x)$. Now let $\tiQ(x,a) := \limsup_k Q_k(x,a)$. We have
\begin{align}
\tiQ(x,a) &= \limsup_{k \to \infty} \cT' Q_k(x,a) \nonumber \\
&\le \limsup_{k \to \infty} \cT Q_k(x, a) \nonumber \\
&= \limsup_{k \to \infty} \left [ R(x,a) + \gamma \expects_P \max_{b \in \cA} Q_k(x', b) \right ] \nonumber \\
&\le R(x,a) + \gamma \expects_P \limsup_{k \to \infty} \max_{b \in \cA} Q_k(x', b) \label{eqn:limsup_expectation_swap} \\
&= R(x,a) + \gamma \expects_P \max_b \limsup_{k \to \infty} Q_k(x', b) \label{eqn:max_limsup_commute} \\
&= \cT \tiQ(x, a), \label{eqn:q_smaller_than_tq} 
\end{align}
where in (\ref{eqn:limsup_expectation_swap}) we used Jensen's inequality, and (\ref{eqn:max_limsup_commute}) follows from the commutativity of $\max$ and $\limsup$. Now
\begin{align}
Q_{k+1}(x,a) &= \cT' Q_k(x,a) \nonumber \\
&\ge \cT Q_k(x,a) - \alpha\left [ V_k(x) - Q_k(x,a) \right ] \nonumber \\
&= R(x,a) + \gamma \expects_P V_k(x') - \alpha V_k(x) \oneem + \nonumber \\
& \qquad \alpha Q_k(x, a). \label{eqn:Q_k_inequality}
\end{align}
Now, by Lemma \ref{thm:convergence_of_gap_increasing_operators} $V_k(x)$ converges to $\tiV(x)$. Furthermore, using Lemma \ref{lem:V_k_is_bounded} and Lebesgue's dominated convergence theorem, we have
\begin{equation}
\lim_{k \to \infty} \expects_P V_k(x') = \expects_P \tiV(x') \label{eqn:dominated_convergence}.
\end{equation}
We now take the $\limsup$ of both sides of (\ref{eqn:Q_k_inequality}), which Lemma \ref{lem:V_k_is_bounded} guarantees exists, and obtain
\begin{align*}
\tiQ(x,a) &\ge R(x,a) + \gamma \expects_P \tiV(x') - \alpha \tiV(x) + \alpha \tiQ(x,a) \\
&= \cT \tiQ(x,a) - \alpha \tiV(x) + \alpha \tiQ(x,a).
\end{align*}
Thus
\begin{align*}
\tiQ (x,a) &\ge \frac{1}{1 - \alpha} \left [ \cT \tiQ (x,a) - \alpha \tiV(x) \right ], \text{ and }\\
\tiV (x) &\ge \frac{1}{1 - \alpha} \left [ \max_{a \in \cA} \cT \tiQ (x,a) - \alpha \tiV(x) \right ] \\
\tiV (x) &\ge \max_{a \in \cA} \cT \tiQ (x,a) .
\end{align*}
Combining the above with (\ref{eqn:q_smaller_than_tq}), we deduce that
\begin{equation*}
\tiV(x) = \max_{a \in \cA} \cT \tiQ(x,a) = \max_{a \in \cA} \left [ R(x,a) + \gamma \expects_P \tiV(x') \right ]
\end{equation*}
and, by uniqueness of the fixed point of the Bellman operator over $\sV$, it must be that $\tiV = V^*$.

Now suppose that for some $x \in \cX$, $\tilde a \in \cA$, we have
\begin{equation*}
Q^*(x, \tilde a) < V^*(x).
\end{equation*}
By Condition 1 
\begin{align*}
Q_k(x, \tilde a) &= \cT' Q_{k-1}(x, \tilde a) \\
&\le \cT Q_{k-1}(x, \tilde a) \\
&=\cT Q^*(x, \tilde a) - \gamma \expects_{P_{\tia}} \left [ V^*(x') - V_{k-1}(x') \right ] \\
&=Q^*(x, \tia) - \gamma \expects_{P_{\tia}} \left [ V^*(x') - V_{k-1}(x') \right ],
\end{align*}
where $P_{\tia} := P(\cdot \cbar x, \tia)$. Using (\ref{eqn:dominated_convergence}) we take the $\limsup$ on both sides and find that
\begin{align*}
\limsup_{k \to \infty} Q_k(x, \tia) &\le Q^*(x, \tia) - \gamma \expects_{P_{\tia}} \left [ V^*(x') - \tiV(x') \right ] \\
&= Q^*(x,\tia) \\
& < V^*(x) .
\end{align*}
We conclude that
\begin{equation*}
Q^*(x,a) < V^*(x) \, \implies \, \limsup_{k \to \infty} Q_k(x,a) < V^*(x).
\end{equation*}
Hence, $\cT'$ is optimality-preserving. To prove that $\cT'$ is gap-increasing, observe that the statement 
\begin{equation*}
\liminf_{k \to \infty} \big [ V_k(x) - Q_k(x,a) \big ] \ge V^*(x) - Q^*(x,a)
\end{equation*}
is now equivalent to
\begin{equation}\label{eqn:gap_increasing_reformulated}
\limsup_{k \to \infty} Q_k(x,a) \le Q^*(x,a)
\end{equation}
since $\lim_k V_k(x) = V^*(x)$. But we know (\ref{eqn:gap_increasing_reformulated}) to be true from Condition 1 (see the proof of Lemma \ref{thm:convergence_of_gap_increasing_operators}). 
\end{proof}

\begin{cor}
The consistent Bellman operator $\cTc$ ((5) in the main text) and consistent Q-value interpolation Bellman operator $\cTcq$ ((9) in the main text) are optimality-preserving and gap-increasing. 
\end{cor}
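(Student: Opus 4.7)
The plan is to reduce both claims to a direct application of Theorem~\ref{thm:optimality_achieving_operators}: for each of $\cTc$ and $\cTcq$ I will exhibit an $\alpha \in [0,1)$ and an appropriate ``base'' Bellman operator so that Conditions~(1) and~(2) of the theorem hold. Since Theorem~\ref{thm:optimality_achieving_operators} yields optimality-preservation and the gap-increasing property simultaneously, this is all that is needed.

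For $\cTc$ on an aggregation scheme, I take the role of $\cT$ in Theorem~\ref{thm:optimality_achieving_operators} to be $\cTa$ (equation~(\ref{eqn:aggregate_bellman_operator})), which is literally the Bellman operator on the induced MDP $M' = (\cZ,\cA,P',R',\gamma)$. A direct subtraction of (\ref{eqn:consistent_bellman_operator_aggregation}) from (\ref{eqn:aggregate_bellman_operator}) gives
$$\cTa Q(z,a) - \cTc Q(z,a) \;=\; \gamma\, \expects_D \expects_P \expects_A \indic{z = z'}\bigl[V(z) - Q(z,a)\bigr],$$
using that $\max_b Q(z',b) = V(z)$ whenever $z' = z$. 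Since $V(z) - Q(z,a) \ge 0$ this quantity is nonnegative, giving Condition~1; since the indicator is bounded by one, it is at most $\gamma\,[V(z) - Q(z,a)]$, so $\alpha := \gamma < 1$ works for Condition~2.

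For $\cTcq = \min\{\cTq, \cTq'\}$, I take the base operator to be $\cTq$. Condition~1 follows immediately from $\cTcq \le \cTq$. For Condition~2, it suffices to show $\cTq' Q \ge \cTq Q - \gamma\,[V(z) - Q(z,a)]$. Fix $x'$ and let $b^\star(x') := \argmax_b Q(x',b)$; plugging $b^\star(x')$ into the $\max$ that defines $\cTq' Q$ and taking expectations yields
$$\cTq Q(z,a) - \cTq' Q(z,a) \;\le\; \gamma\, \expects_D \expects_P A(z \cbar x')\bigl[Q(z,b^\star(x')) - Q(z,a)\bigr] \;\le\; \gamma\bigl[V(z) - Q(z,a)\bigr],$$
using $Q(z,b^\star(x')) \le V(z)$ and $A(z \cbar x') \in [0,1]$. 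Taking $\alpha := \gamma$ again, both conditions hold and Theorem~\ref{thm:optimality_achieving_operators} finishes the argument.

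The main obstacle is the lower bound on $\cTq' Q$: comparing the two maxima $\max_b Q(x',b)$ and $\max_b[Q(x',b) - A(z \cbar x')(Q(z,b)-Q(z,a))]$ head-on is unwieldy, so one must exhibit a specific, possibly suboptimal argument (namely $b^\star$ chosen on the $\cTq$ side) inside the second $\max$ to expose the useful cancellation. The $\cTc$ case, by contrast, is a one-line consequence of the definitions.
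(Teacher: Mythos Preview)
Your proposal is correct and is exactly the route the paper intends: the corollary is stated without proof, as a direct application of Theorem~\ref{thm:optimality_achieving_operators} once Conditions~(1) and~(2) are verified, and your verifications for $\cTc$ (with base $\cTa$, yielding $\cTa Q - \cTc Q = \gamma P'(z\cbar z,a)[V(z)-Q(z,a)]$) and for $\cTcq$ (with base $\cTq$, via the $b^\star(x')$ plug-in bound) are accurate. The paper itself signals both steps---it rewrites $\cTc Q(x,a) = \cT Q(x,a) - \gamma P(x\cbar x,a)[V(x)-Q(x,a)]$ in the advantage-learning discussion, and notes that the $\min$ in the definition of $\cTcq$ was chosen precisely to enforce Condition~(1).
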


\section{Experimental Details: Bicycle}\label{sec:bicycle}

We used the bicycle simulator described by \citet{randlov98learning} with a reward function which encourages driving towards the goal. Recall that Randlov and Alstrom's reward function is
\begin{equation*}
R(x,a) := \left \{ \begin{array}{ll}
    -1 & \text{if bicycle falls} \\
    0.01 & \text{if goal is reached} \\
    (4 - \psi^2) \times 0.00004 & \text{otherwise}
    \end{array} \right .
\end{equation*}
As noted by Randlov and Alstrom themselves, this reward function is unsuitable for value iteration methods, since it rewards driving away from the goal. Instead we use the following related reward function 
\begin{equation*}
R(x,a) := \left \{ \begin{array}{ll}
    -c & \text{if fallen} \\
    1.0 & \text{if goal reached } \\
    (\pi^2/4 - \psi^2 - 1) \times 0.001 & \text{otherwise}
    \end{array} \right . 
\end{equation*}
with $c := (\frac{3}{4} \pi^2 - 1) \times 0.001$ the largest negative reward achievable by the agent. Empirically, we found this reward function easier to work with, while our results remained qualitatively similar for similar reward functions. We further use a discount factor of $\gamma = 0.99$.

We consider two sample-based operators on $\sQ_{\cZ, \cA}$, the space of Q-functions over representative states. The sample-based Q-value interpolation Bellman operator is defined as
\begin{equation*}
\cTq Q(z,a) := R(z,a) + \gamma \frac{1}{k} \sum_{i=1}^k \max_{b \in \cA} Q(x_i', b),
\end{equation*}
with $k \in \bN$ and $x_i' \sim P(\cdot \cbar z, a)$. The sample-based consistent Q-value interpolation Bellman operator $\cTcq$ is similarly defined by sampling $x'$ from $P$:
\begin{align*}
\cTq' Q(z,a) &:= R(z,a) + \\
& \hspace{-3.5em} \frac{\gamma}{k} \sum_{i=1}^k \max_{b \in \cA} \Big [ Q(x', b) - A(z \cbar x') \left( Q(z,b) - Q(z,a) \right ) \Big ] \\
\cTcq Q(z,a) &:= \min \left \{ \cTq Q(z,a), \cTq' Q(z,a) \right \} .
\end{align*}
In both cases, we use Q-value interpolation to define a Q-function over $\cX$: 
\begin{equation*}
Q(x,a) := \expects_{z \sim A(\cdot \cbar x)} Q(z, a).
\end{equation*}
For each operator $\cT'$, we computed a sequence of Q-functions $Q_k \in \sQ_{\cZ, \cA}$ using an averaging form of value iteration:
\begin{equation*}
Q_{k+1}(z,a) = (1 - \eta) Q_k(z,a) + \eta \cT' Q_k(z,a),
\end{equation*}
applied simultaneously to all $z \in \cZ$ and $a \in \cA$. We chose this averaging version because it led to faster convergence, and lets us take $k = 1$ in the definition of both operators. From a parameter sweep we found $\eta = 0.1$ to be a suitable step-size.

Our multilinear grid was defined over the six state variables. As done elsewhere in the literature, we defined our grid over the following bounded variables: 
\begin{align*}
\omega & \in \left [ - \frac{4}{9} \pi, \frac{4}{9} \pi \right ], \\
\dot \omega & \in [ -2, 2 ], \\ 
\theta & \in \left [ - \frac{\pi}{15}, \frac{\pi}{15} \right ], \\
\dot \theta & \in \left [ -0.5, 0.5 \right ], \\
\psi & \in [ -\pi, \pi ], \\
d & \in [ 10, 1200 ] .
\end{align*}
Values outside of these ranges were accordingly set to the range's minimum or maximum.

For completeness, Figure \ref{fig:bicycle-order-8} compares the performance of the Bellman and consistent Bellman operators, as well as advantage learning and persistent advantage learning (with $\alpha = 0.1$), on $8 \times \dots \times 8$ and $10 \times \dots \times 10$ grids. Here, the usual Bellman operator is unable to find a solution to the goal, while the consistent Bellman operator successfully does so. The two other operators also achieve superior performance compared to Bellman operator, although appear slightly more unstable in the smaller grid setting. 

\begin{figure}
\centering{
\includegraphics[width=1.6in]{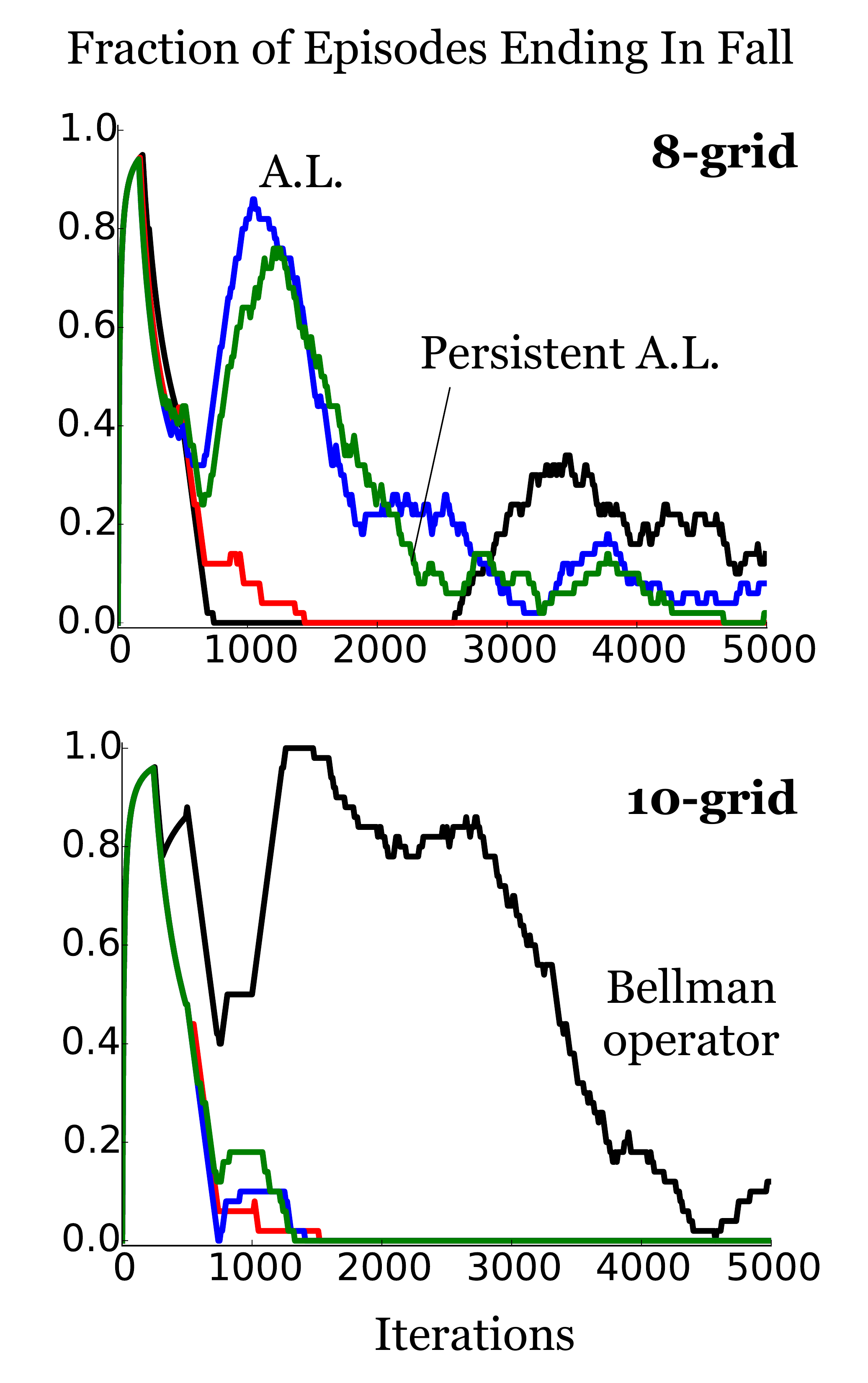}
\includegraphics[width=1.6in]{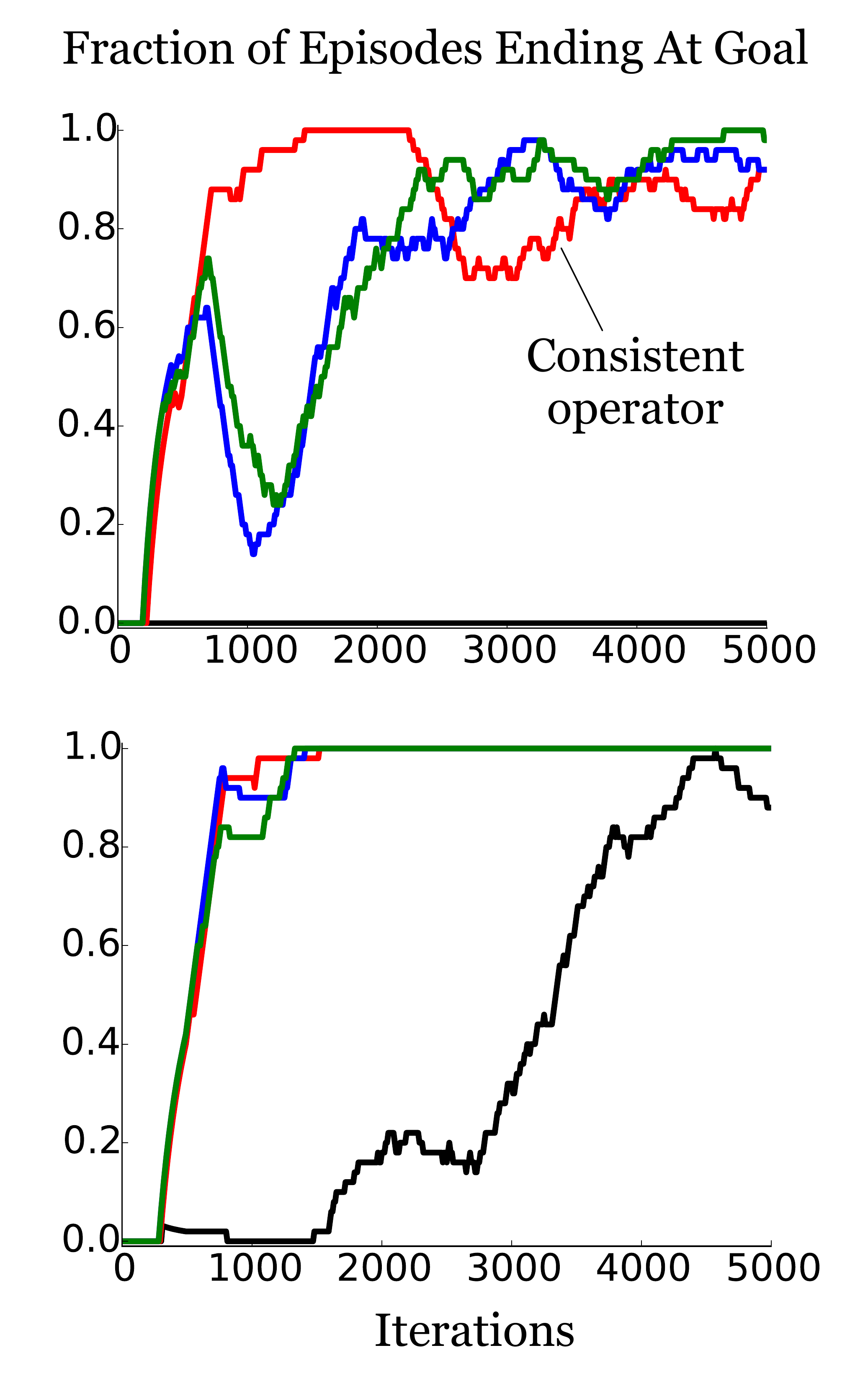}
}
\caption{\textbf{Top.} Falling and goal-reaching frequency for greedy policies derived from value iteration on a $8 \times \dots \times 8$ grid. \textbf{Bottom.} The same, for a $10 \times \dots \times 10$ grid.\label{fig:bicycle-order-8}}
\end{figure}

\section{Experimental Details: ALE}\label{sec:atari}

We omit details of the DQN architecture, which are provided in \citet{mnih15human}. A \emph{frame} is a single emulation step within the ALE, while a \emph{time step} consists of four consecutive frames which are treated atomically by the agent. 

Our first Atari 2600 experiment (\emph{Stochastic Minimal} setting) used stochastic controls, which operate as follows: at each frame (not time step), the environment \emph{accepts} the agent's action with probability $1-p$, or \emph{rejects} it with probability $p$. If an action is rejected, the previous frame's action is repeated. In our setting, the agent selects a new action every four frames; in this situation, the stochastic controls approximate a form of reaction delay. This particular setting is part of the latest Arcade Learning Environment. For our experiments we use the ALE 0.5 standard value of $p=0.25$, and trained agents for 100 million frames.

Our second Atari 2600 experiment (\emph{Original DQN} setting) was averaged over three different trials, ran for 200 million frames (instead of 100 million), defined a lost life as a termination signal, and did not use stochastic controls. This matches the experimental setting of \citet{mnih15human}. A full table of our results is provided in Table \ref{table:full_atari_results}. 
\begin{table*}[p]
\begin{center}
\vspace{-4em}
\begin{tabular}{|r|r|r|r|}
\hline
Game & Bellman & Advantage Learning & Persistent A.L. \\ 
\hline
\textsc{Asterix} & 6074.98 & 12852.08 & \textbf{\textcolor{blue}{19564.90}} \\
\hline
\textsc{Beam Rider} & 9316.10 & 10054.58 & \textbf{\textcolor{blue}{13145.34}} \\
\hline
\textsc{Pong} & \textbf{\textcolor{blue}{19.80}} & 19.66 & 19.76 \\
\hline
\textsc{Seaquest} & 5458.17 & 8670.50 & \textbf{\textcolor{blue}{13230.74}} \\
\hline
\textsc{Space Invaders} & 2067.19 & \textbf{\textcolor{blue}{3460.79}} & 3277.59 \\
\hline
\hline
\textsc{Alien} & 3154.67 & 4990.91 & \textbf{\textcolor{blue}{5699.81}} \\
\hline
\textsc{Amidar} & 969.88 & \textbf{\textcolor{blue}{1557.43}} & 1451.65 \\
\hline
\textsc{Assault} & \textbf{\textcolor{blue}{4573.67}} & 3661.51 & 3304.33 \\
\hline
\textsc{Asteroids} & 1827.97 & \textbf{\textcolor{blue}{1924.42}} & 1673.52 \\
\hline
\textsc{Atlantis} & 636657.62 & 553591.67 & \textbf{\textcolor{blue}{1465250.00}} \\
\hline
\textsc{Bank Heist} & 511.00 & 633.63 & \textbf{\textcolor{blue}{874.99}} \\
\hline
\textsc{Battle Zone} & 28082.91 & 28789.29 & \textbf{\textcolor{blue}{34583.07}} \\
\hline
\textsc{Berzerk} & 667.61 & 747.26 & \textbf{\textcolor{blue}{1328.25}} \\
\hline
\textsc{Bowling} & \textbf{\textcolor{blue}{74.62}} & 57.41 & 71.59 \\
\hline
\textsc{Boxing} & 88.66 & 93.94 & \textbf{\textcolor{blue}{94.30}} \\
\hline
\textsc{Breakout} & 378.69 & 425.32 & \textbf{\textcolor{blue}{431.89}} \\
\hline
\textsc{Carnival} & \textbf{\textcolor{blue}{5238.14}} & 5111.40 & 4679.93 \\
\hline
\textsc{Centipede} & \textbf{\textcolor{blue}{5719.11}} & 4225.18 & 4539.55 \\
\hline
\textsc{Chopper Command} & \textbf{\textcolor{blue}{8195.88}} & 5431.36 & 5734.93 \\
\hline
\textsc{Crazy Climber} & 114105.56 & 123410.71 & \textbf{\textcolor{blue}{130002.71}} \\
\hline
\textsc{Defender$^\dagger$} & 16746.68 & 30643.59 & \textbf{\textcolor{blue}{32038.93}} \\
\hline
\textsc{Demon Attack} & 23212.19 & 27153.48 & \textbf{\textcolor{blue}{70908.17}} \\
\hline
\textsc{Double Dunk} & -6.23 & \textbf{\textcolor{blue}{-0.15}} & -2.51 \\
\hline
\textsc{Elevator Action} & 26675.00 & 27088.89 & \textbf{\textcolor{blue}{29100.00}} \\
\hline
\textsc{Enduro} & 776.14 & 1252.70 & \textbf{\textcolor{blue}{1343.10}} \\
\hline
\textsc{Fishing Derby} & 11.65 & 21.32 & \textbf{\textcolor{blue}{28.13}} \\
\hline
\textsc{Freeway} & 31.14 & 31.72 & \textbf{\textcolor{blue}{32.30}} \\
\hline
\textsc{Frostbite} & 1485.42 & 2305.82 & \textbf{\textcolor{blue}{3248.96}} \\
\hline
\textsc{Gopher} & 8479.98 & \textbf{\textcolor{blue}{11912.68}} & 10611.81 \\
\hline
\textsc{Gravitar} & \textbf{\textcolor{blue}{448.74}} & 417.65 & 446.92 \\
\hline
\textsc{H.E.R.O.} & 18490.97 & \textbf{\textcolor{blue}{24788.86}} & 24175.79 \\
\hline
\textsc{Ice Hockey} & -2.13 & -1.24 & \textbf{\textcolor{blue}{-0.25}} \\
\hline
\textsc{James Bond} & \textbf{\textcolor{blue}{867.84}} & 848.46 & 772.09 \\
\hline
\textsc{Kangaroo} & 9157.98 & 10809.16 & \textbf{\textcolor{blue}{11478.46}} \\
\hline
\textsc{Krull} & 8500.48 & \textbf{\textcolor{blue}{9548.92}} & 8689.81 \\
\hline
\textsc{Kung-Fu Master} & 25977.53 & 32182.99 & \textbf{\textcolor{blue}{34650.91}} \\
\hline
\textsc{Montezuma's Revenge} & 0.64 & 0.42 & \textbf{\textcolor{blue}{1.72}} \\
\hline
\textsc{Ms. Pac-Man} & 3081.29 & \textbf{\textcolor{blue}{4065.80}} & 3917.55 \\
\hline
\textsc{Name This Game} & 8585.03 & \textbf{\textcolor{blue}{11025.26}} & 10431.33 \\
\hline
\textsc{Phoenix$^\dagger$} & 14278.95 & \textbf{\textcolor{blue}{22038.27}} & 14495.56 \\
\hline
\textsc{Pitfall!$^\dagger$} & \textbf{\textcolor{blue}{0.00}} & \textbf{\textcolor{blue}{0.00}} & \textbf{\textcolor{blue}{0.00}} \\
\hline
\textsc{Pooyan} & 4736.79 & 4801.27 & \textbf{\textcolor{blue}{5858.84}} \\
\hline
\textsc{Private Eye} & 957.83 & \textbf{\textcolor{blue}{5276.16}} & 339.15 \\
\hline
\textsc{Q*Bert} & 10840.83 & \textbf{\textcolor{blue}{14368.03}} & 14254.78 \\
\hline
\textsc{River Raid} & 7315.20 & 10585.12 & \textbf{\textcolor{blue}{12813.27}} \\
\hline
\textsc{Road Runner} & 38042.07 & \textbf{\textcolor{blue}{52351.23}} & 37856.16 \\
\hline
\textsc{Robotank} & 61.97 & 69.31 & \textbf{\textcolor{blue}{70.53}} \\
\hline
\textsc{Skiing} & -13049.42 & -13264.51 & \textbf{\textcolor{blue}{-12173.35}} \\
\hline
\textsc{Solaris$^\dagger$} & 4638.85 & \textbf{\textcolor{blue}{4785.16}} & 3274.70 \\
\hline
\textsc{Star Gunner} & 55558.27 & 61353.59 & \textbf{\textcolor{blue}{61521.87}} \\
\hline
\textsc{Surround} & -5.79 & -4.15 & \textbf{\textcolor{blue}{0.72}} \\
\hline
\textsc{Tennis} & \textbf{\textcolor{blue}{0.00}} & \textbf{\textcolor{blue}{0.00}} & \textbf{\textcolor{blue}{0.00}} \\
\hline
\textsc{Time Pilot} & 5788.96 & \textbf{\textcolor{blue}{8969.12}} & 8749.26 \\
\hline
\textsc{Tutankham} & 200.17 & \textbf{\textcolor{blue}{245.22}} & 197.33 \\
\hline
\textsc{Up and Down} & 12831.57 & \textbf{\textcolor{blue}{13909.74}} & 13542.07 \\
\hline
\textsc{Venture} & \textbf{\textcolor{blue}{373.79}} & 198.69 & 243.75 \\
\hline
\textsc{Video Pinball} & \textbf{\textcolor{blue}{611840.72}} & 543504.00 & 542052.00 \\
\hline
\textsc{Wizard Of Wor} & 2410.47 & 9541.14 & \textbf{\textcolor{blue}{10254.01}} \\
\hline
\textsc{Yar's Revenge$^\dagger$} & 21440.45 & \textbf{\textcolor{blue}{24240.03}} & 17141.56 \\
\hline
\textsc{Zaxxon} & 6416.06 & \textbf{\textcolor{blue}{9129.61}} & 8155.60 \\
\hline
\hline
Times Best & 12 & 21 & 31 \\
\hline
\end{tabular}

\caption{Highest performance achieved by each of our operators. For each game, the score of the best operator is highlighted. Games with a $\dagger$ were not used by \citet{bellemare13arcade}. See Section 4 of the main text for more details.\label{table:full_atari_results}}
\end{center}
\end{table*}

Our last experiment took place in the Original DQN setting. We generated a trajectory from a trained DQN agent playing an $\epsilon$-greedy policy with $\epsilon = 0.05$. The full trajectory (up to the end of the episode) was recorded in this way. We then queried the value functions of the trained agents, including the DQN used to generate the trajectory, in order to generate Figure 4 of the main text. For clarity we report action gaps averaged according to a rolling window of length 50. 

Out of the 60 games for which we report results, 5 are new when compared to the table of results provided by \citet{bellemare13arcade}. These five games are identified with a $\dagger$ in Table \ref{table:full_atari_results}.

\subsection{DQN Implementation Details}

Recall that DQN maintains two networks in parallel: a \emph{policy} network, which is used to select actions and is updated at every time step, and a \emph{target} network. The target network is used to compute the error term $\Delta Q$, and is only updated every 10,000 time steps \citep{mnih15human}. In our experiments we also used this target network to compute the $\Delta_{\textsc{al}} Q$ and $\Delta_{\textsc{pal}} Q$, including the added correction term. Our operators performed worse when the correction term was instead computed from the policy network. 

\subsection{Parameter Selection}

\begin{figure*}[ht!]
\centering{
\includegraphics[width=5.6in]{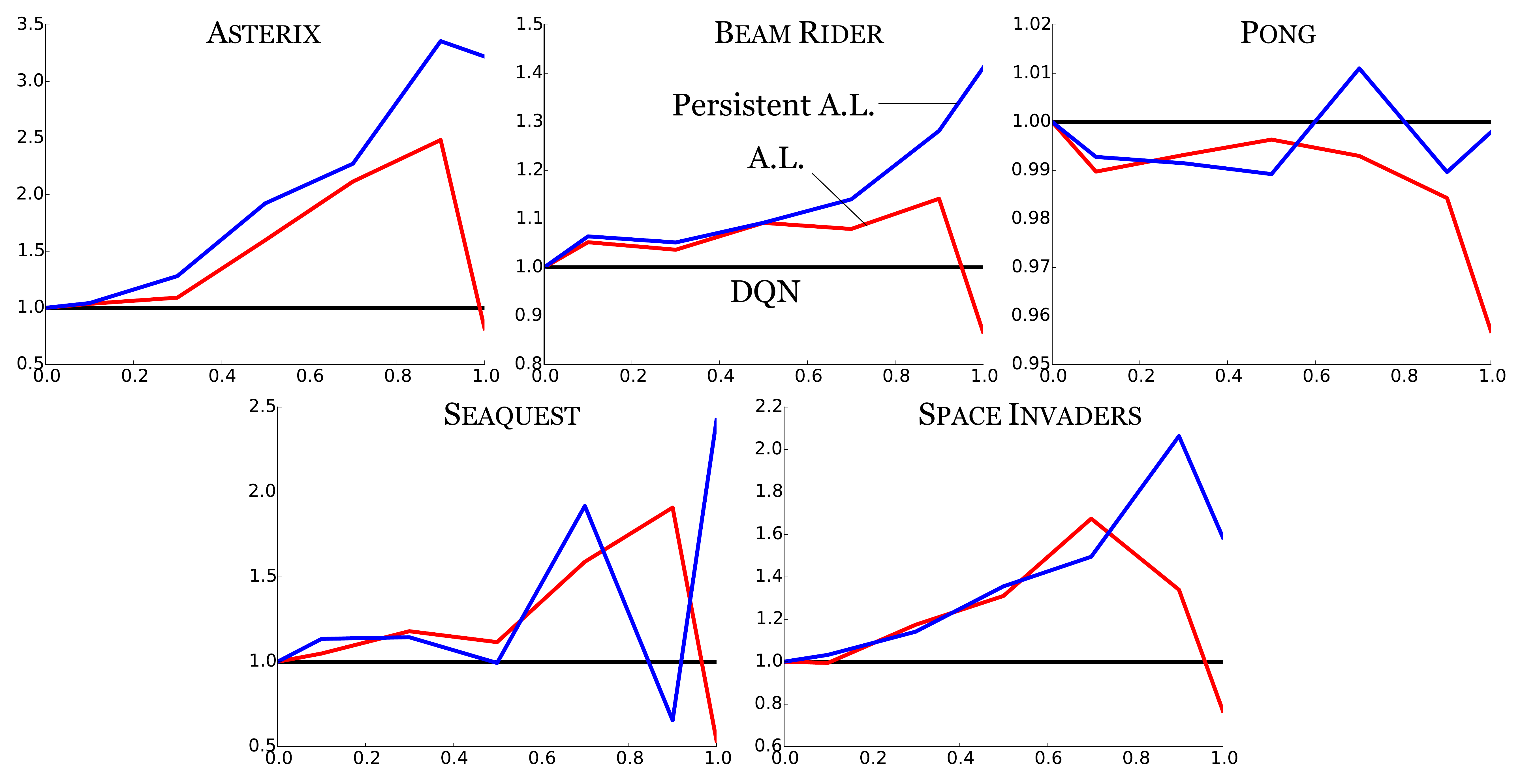}
}
\caption{Performance of trained agents in function of the $\alpha$ parameter. Note that $\alpha = 1.0$ does not satisfy our theorem's conditions. We attribute the odd performance of Seaquest agents using Persistent Advantage Learning with $\alpha = 0.9$ to a statistical issue.\label{fig:alpha_parameter}}
\end{figure*}

We used five training games (\gamename{Asterix}, \gamename{Beam Rider}, \gamename{Pong}, \gamename{Seaquest}, \gamename{Space Invaders}) to select the $\alpha$ parameter for both of our operators. Specifically, we trained agents using our second experimental setup with parameters $\alpha \in \left \{ 0.0, 0.1, 0.3, 0.5, 0.7, 0.9, 1.0 \right \}$, evaluated them according to the highest score achieved, and manually selected the $\alpha$ value which seemed to achieve the best performance. Note that $\alpha = 0.0$ corresponds to DQN in both cases. Figure \ref{fig:alpha_parameter} depicts the results of this parameter sweep. 

\end{document}